\documentclass{alt2027} 

\usepackage{times}
\usepackage{booktabs}
\usepackage{amsfonts}
\usepackage{nicefrac}
\usepackage{microtype}
\usepackage{algorithm}
\usepackage{algorithmic}
\usepackage{multirow}
\usepackage{tikz}
\usepackage[most]{tcolorbox}
\usepackage{dsfont}
\usepackage{enumitem}

\usepackage{amsmath,amsfonts,bm}

\def\eqref#1{eq.~(\ref{#1})}

\def\1{\bm{1}}

\DeclareMathAlphabet{\mathsfit}{\encodingdefault}{\sfdefault}{m}{sl}
\SetMathAlphabet{\mathsfit}{bold}{\encodingdefault}{\sfdefault}{bx}{n}

\DeclareMathOperator*{\argmax}{arg\,max}

\counterwithin{theorem}{section}
\newtheorem{assumption}[theorem]{Assumption}

\renewenvironment{proof}[1][\proofname]{\par\noindent{\bfseries\upshape #1\ }}{\jmlrQED}

\makeatletter
\renewcommand\ps@jmlrtps{%
  \let\@mkboth\@gobbletwo
  \def\@oddhead{}\let\@evenhead\@oddhead
  \def\@oddfoot{\hfill\small\rmfamily\thepage\hfill}\let\@evenfoot\@oddfoot}
\makeatother

\title[Environment-Aware OOD Detection for Vision--Language Models]{A Theoretical Analysis of Environment-Aware Out-of-Distribution Detection for Vision--Language Models}

\author[Sekeh and Wisell]{%
 \Name{Salimeh Sekeh} \Email{ssekeh@sdsu.edu}\\
 \Name{Mary Wisell} \Email{mwisell4707@sdsu.edu}\\
 \addr San Diego State University, San Diego, CA, USA}

\hypersetup{pdftitle={A Theoretical Analysis of Environment-Aware Out-of-Distribution Detection for Vision-Language Models},
            pdfauthor={Salimeh Sekeh, Mary Wisell}}

\begin{document}

\maketitle

\begin{abstract}%
Continual vision--language models (VLMs) encounter a stream of environments and, at deployment, must also reject inputs that do not belong to the environment at hand. 
In this setting, the class information learned in earlier environments is a resource for out-of-distribution (OOD) detection, yet existing detectors score each environment in isolation, and it remains theoretically unclear when the previous environments help or hurt. 
We develop a contribution-theoretic analysis of environment-aware OOD detection built on pairwise cosine expectations, called contribution scores, that measure how the image and text features of one environment align with those of another. 
We propose a cross-environment maximum-concept-matching (MCM) score with two terms: a primary term that matches the test image of the current environment against the class texts of the current environment, and a secondary term that matches the same image against the class texts of a previous environment, weighted by the intra-modal alignment between the two environments. 
We prove that, under a positive environment transition and a bounded class-matching margin for OOD inputs, this score attains a false-positive rate no larger than that of the standard MCM detector at an adjusted threshold, so the secondary term strengthens detection when the environments are strongly connected. 
We then show that the score gap of an MCM detector between two environments equals the self-contribution of the in-distribution environment minus its cross-modal contribution into the OOD environment, which separates soft OOD environments, provably detectable with explicit AUROC, true-positive, and false-positive guarantees, from hard OOD environments, for which any threshold that retains the in-distribution data also accepts OOD inputs at nearly the same guaranteed rate. 
For cascaded detection, in which the text prototypes of all seen environments are pooled, the false-positive rate on every past environment is non-decreasing in time, and we characterize when detection provably fails. 
A contribution balance factor applied to the gradients of the classification loss and of a smoothed false-positive rate completes the framework.
\end{abstract}

\begin{keywords}%
Out-of-distribution detection, continual learning, vision--language models, cross-modal alignment, hard OOD examples, AUROC bounds
\end{keywords}

\section{Introduction}
\label{sec:intro}
Large vision-language models (VLMs) such as CLIP \citep{radford2021clip} are increasingly deployed in open, changing environments.
They are adapted to new environments by sequential fine-tuning, which trades plasticity for stability: representations acquired for a new environment overwrite those learned for earlier ones, causing catastrophic forgetting \citep{kirkpatrick2017overcoming,wang2024comprehensive}.
At the same time, a deployed model must recognize what it does not know. 
At every step it receives a mixture of in-distribution (ID) data and semantically novel out-of-distribution (OOD) data, and it must classify the former while rejecting the latter \citep{hendrycks2016baseline,bendale2015towards}.

VLMs make this possible without task-specific training: a test image can be scored against the text descriptions of the current classes, as in maximum concept matching (MCM) \citep{ming2022delving}. 
Existing detectors, however, treat every environment in isolation, and existing continual methods, such as prompt tuning \citep{wang2022l2p,wang2022dualprompt}, adapters \citep{gao2021clipadapter}, and weight interpolation \citep{zheng2023zscl}, are designed to limit forgetting rather than to detect novel inputs. 
When environments arrive continuously, the class information of the previous environments is already available and can sharpen detection in the current one. 
Two questions follow: when should a detector use the previous environments, and why does detection that works for one environment fail once several environments have been learned?

We answer both questions with modality-specific contribution scores: pairwise cosine expectations that measure how the image and text features of one environment align with those of another. 
Four scores capture cross-modal ($Con_{\mathcal{I}_t\rightarrow\mathcal{T}_{t'}}$, $Con_{\mathcal{T}_t\rightarrow\mathcal{I}_{t'}}$) and intra-modal ($Con_{\mathcal{I}_t\rightarrow\mathcal{I}_{t'}}$, $Con_{\mathcal{T}_t\rightarrow\mathcal{T}_{t'}}$) structure between environments $\mathcal{E}_t$ and $\mathcal{E}_{t'}$, and each is computable from stored training representations.

Our analysis tells a two-part story. 
The first part concerns how to use the previous environment. 
We propose a cross-environment MCM score with two terms (Algorithm~\ref{alg:env-aware-ood}). 
The primary term computes the similarity between a test image of the current environment $\mathcal{E}_{t'}$ and the class texts of $\mathcal{E}_{t'}$, as in standard MCM. 
The secondary term computes the similarity between the same test image and the class texts of a previous environment $\mathcal{E}_t$. 
The secondary term is added when the two environments are aligned, that is, when their intra-modal contributions are high, and subtracted otherwise. 
We prove that, under a positive environment transition and a bounded class-matching margin for OOD inputs, the cross-environment score attains a false-positive rate no larger than that of the standard MCM detector at an adjusted threshold, so the secondary term enhances OOD detection when there is a strong connection between environments. 
The second part explains when detection succeeds or fails. 
The score gap of an MCM detector between two environments is exactly the self-contribution of the ID environment minus its cross-contribution into the OOD environment. 
This gap classifies OOD environments as soft, for which a threshold inside the gap certifies a high true-positive rate and a low false-positive rate together with an AUROC lower bound, or hard, for which any threshold that retains the ID data accepts OOD data at nearly the same guaranteed rate; strong intra-modal image alignment between two environments provably induces hardness. 
When the prototypes of all seen environments are pooled, detection can fail on earlier environments.
Underlying both parts is a Contribution Balance Factor (CBF), a mask applied to the gradients of the classification loss and of a smoothed false-positive rate that controls how the contributions of one environment transfer to the next across environments and modalities.

Our contributions are as follows:
\begin{itemize}
    \item \textbf{A contribution-theoretic view of continual OOD detection.} We show that the moments of an MCM detector are cross-modal contributions, so the separation between two environments reduces to a single computable quantity, the score gap (Definition~\ref{def:gencon}, Lemma~\ref{lem:ood-moments}).
    \item \textbf{Detection that reuses the previous environment.} We introduce a cross-environment MCM score whose secondary term is weighted by the alignment between environments, and we prove that it does not increase the false-positive rate of standard MCM (Theorem~\ref{thm3:MCM}). We also characterize the optimal contribution balance factor for the classification loss and for a smoothed false-positive rate (Theorems~\ref{thm2:BF-Gradient} and~\ref{thm3:FPR-CBF-Optimal}), and combine both in a training and detection procedure (Algorithm~\ref{alg:env-aware-ood}).
    \item \textbf{A hard versus soft dichotomy for OOD environments.} Soft OOD environments admit explicit AUROC, true-positive, and false-positive guarantees, while hard ones cannot be rejected by any threshold that retains the in-distribution data (Theorems~\ref{thm:soft-ood} and~\ref{thm:hard-ood}). Intra-modal alignment between environments provably induces hardness (Proposition~\ref{prop:align-hard}, Corollary~\ref{cor:aligned-hard}).
    \item \textbf{A failure mode of cascaded detection.} Pooling the prototypes of all seen environments makes the false-positive rate on every past environment non-decreasing in time, and failure concentrates on environments that were learned well or that are aligned with others (Proposition~\ref{prop:pollution}, Theorem~\ref{thm:env-failure}, Corollary~\ref{cor:which-fail}).
\end{itemize}

\section{Related Work}
\label{sec:related}
\textbf{Continual learning.} Continual learning mitigates catastrophic forgetting through regularization \citep{kirkpatrick2017overcoming,zenke2017si,aljundi2018mas}, gradient constraints \citep{lopez2017gradient,farajtabar2020orthogonal,saha2021gpm}, and replay \citep{rebuffi2017icarl,rolnick2019er}; see \citet{wang2024comprehensive} for a survey. For VLMs, prompt-based methods \citep{wang2022l2p,wang2022dualprompt,zhou2022coop,zhou2022cocoop} and adapters \citep{gao2021clipadapter} adapt CLIP while keeping the backbone frozen, and ZSCL \citep{zheng2023zscl} preserves zero-shot transfer by reference-model interpolation. These methods target classification accuracy and do not address the rejection of unknown inputs.

\textbf{OOD detection.} Post-hoc detectors use the maximum softmax probability \citep{hendrycks2016baseline}, temperature scaling with input perturbation \citep{liang2018enhancing}, Mahalanobis distance \citep{lee2018simple}, or free energy \citep{liu2020energy}, refined by activation rectification \citep{sun2021react} and sparsification \citep{sun2022dice}. Training-time methods use auxiliary outliers \citep{hendrycks2018deep} or contrastive representations \citep{tack2020csi}, and open-world recognition couples rejection with the discovery of new classes \citep{bendale2015towards,joseph2021towards,vaze2022generalized}. With VLMs, MCM \citep{ming2022delving} matches an image against class text prototypes, CLIPN \citep{wang2023clipn} learns explicit rejections, and zero-shot variants \citep{fort2021clip,esmaeilpour2022zero} need no task-specific training; near-OOD inputs remain markedly harder than far-OOD ones \citep{fort2021clip}. SCONE \citep{bai2023scone} uses unlabeled wild data, and Temp-SCONE \citep{naiknaware2024tempscone} and T-QPM \citep{naiknaware2025tqpm} extend it to temporal shift. All of these detectors score an environment using only its own classes.

\textbf{Theory of OOD detection.} Prior theory characterizes when OOD detection is learnable \citep{fang2022learnable} and analyzes detection with pretrained VLMs through an information-theoretic lens \citep{peng2026information}, while the modality gap of contrastive VLMs limits cross-modal cosine similarities \citep{liang2022mindgap}. None of these analyses tracks how detection changes across a sequence of environments or how the classes of a previous environment can be used. We provide a contribution-theoretic account of both, with bounds that rest on Cantelli's one-sided inequality \citep{cantelli1928confini}.

\section{Preliminaries}
\label{sec:prelim}

\paragraph{Dynamic Vision-Language Data.} Data arrives from a sequence of environments $\mathcal{E}:=\{\mathcal{E}_1,\ldots,\mathcal{E}_T\}$.
At each time step $t$, the model encounters  $\mathcal{D}_t=\{\mathbf{x}_{t,i},y_{t,i}\}_{i=1}^{n_t}$ from environment $\mathcal{E}_t$. In environment $\mathcal{E}_t$, the data consist of both vision and language modality samples $\mathcal{D}_t=\{\mathcal{I}_t,\mathcal{T}_t\}$, where $\mathcal{I}_t=\{I_{t,i},y_{t,i}\}_{i=1}^{n_t}$ and $\mathcal{T}_t=\{T_{t,i}\}_{i=1}^{n_t}$.

\noindent{\it Text Pattern Construction:} If the ID images are not accompanied with captions, we construct text patterns using the text encoder of the frozen VLM. At time step $t$, for each class $y^{id}_t\in\{1,\ldots,C_t\}$, we employ prompt ensembling over $P$ templates to obtain robust text representations. Let $\{p_{c_t}^{(1)}, \dots, p_{c_t}^{(P)}\}$ denote the set of prompts for class $c_t$ in environment $\mathcal{E}_t$. The class text embedding is computed as:
\begin{align}\label{eq:class-text-embedding}
\mathcal{T}_{c_t} = \mathrm{Normalize}\!\left(\sum_{i=1}^{P}
\mathrm{Normalize}\!\left(\phi^T(p_{c_t}^{(i)})\right)\right),
\end{align}
where $\phi^T$ denotes the frozen CLIP text encoder and
normalization is the $\ell_2$-norm. Collecting all ID class embeddings yields
the \textit{ID text bank}
$\mathcal{T}_t^{id} = [{T}_{t,1}, \dots, T_{t,C_t}]^\top \in \mathbb{R}^{C_t \times d}$, where $d$ is the embedding dimension. Each image $I^{id}_{t,i}$ is paired with a text from the ID text bank. $\mathcal{T}_t^{id}$ is recomputed at each time step as new class labels arrive. \\
If the images are already paired with captions, then $\mathcal{T}_t^{id} = [T_{t,1}, \dots, T_{t,n_t}]\in \mathbb{R}^{n_t \times d}$. \\
\noindent{\it Feature Representations:} CLIP contains an image encoder $f (\cdot)$ and a text encoder $g(\cdot)$, designed to extract features from images and text descriptions respectively. Given the pair (image, text) $(I_{t,i}, T_{t,i})$, the image can
be encoded in $z_{t,i} = f(I_{t,i}) \in \mathbb{R}^{d\times 1}$ and the text can be encoded in $u_{t,i} = g(T_{t,i}) \in \mathbb{R}^{d\times 1}$.

\noindent{\it OOD Detection:} At inference, at each time step $t$, the model receives a new environment $\mathcal{E}_t$ with unlabeled data $\mathcal{D}_t^{id}$ and $\mathcal{D}_t^{ood}$. The model predicts the label of $\mathcal{D}_t^{id}$ and rejects the out-of-distribution (OOD) examples $\mathcal{D}_t^{ood}$. Note that 
$\mathcal{D}_t^{id}=\{\mathcal{I}^{id}_t,\mathcal{T}^{id}_t\}$ and $\mathcal{D}_t^{ood}=\{\mathcal{I}^{ood}_t,\mathcal{T}^{ood}_t\}$ are collections of image–text pairs. At each time step $t$, the model encounters multi-modal data drawn from a non-stationary distribution
\begin{equation}
\mathbb{P}_t= \pi_{t}^{id}\;\mathbb{P}_t^{{id}}
+ \pi_t^{ood}\mathbb{P}_t^{{ood}},\;\;\;\hbox{where}\;\;\;\pi_{t}^{id}+\pi_t^{ood}=1.
\label{eq:wild_method}
\end{equation}
where $\mathbb{P}_t^{id}$ denotes the in-distribution (ID) distribution and $\mathbb{P}_t^{ood}$ denotes the semantic OOD distribution that should be rejected. The detector must simultaneously: 
\textit{i)} maintain classification performance on ID samples, and 
\textit{ii)} reduce the false-positive rate (FPR) and increase the area under the receiver operating characteristic (AUROC) on semantic OOD samples.

\paragraph{Classification Loss.} When no OOD data is present, the regularized loss in (\ref{Loss-function(ID-OOD)}) reduces to the classification loss, which combines a cross-entropy term
\begin{align}\label{Loss-function(ID)}
\mathcal{L}_{CE}(\theta_t):=-\frac{1}{n_t}\sum_{i=1}^{n_t}\log \frac{e^{z^{id}_{t,i}\cdot u^{id}_{t,y_i}/\tau}}{\sum_{c=1}^{C_t} e^{z^{id}_{t,i}\cdot u^{id}_{t,c}/\tau}},
\end{align}
where $u^{id}_{t,c}$ is the text embedding for class label $c$ in ID dataset and $\tau$ is the temperature parameter. 
The CLIP loss is a sum of two cross entropy losses. The first cross entropy term is the image-to-text loss, which is minimized when the text embedding corresponding to an image has a high dot product with that image. The second term inverts this relationship to form a text-to-image loss.
\begin{align}\label{Loss-function(CLIP)}
\mathcal{L}_{CLIP} (\theta_t)=\frac{1}{2n_t}\sum_{i=1}^{n_t}\left[-\log \frac{e^{z^{id}_{t,i}\cdot u^{id}_{t,i}/\tau }}{\sum_{j=1}^{n_t} e^{z^{id}_{t,i}\cdot u^{id}_{t,j}/\tau }}-\log\frac{e^{u^{id}_{t,i}\cdot z^{id}_{t,i}/\tau }}{\sum_{j=1}^{n_t} e^{u^{id}_{t,i}\cdot z^{id}_{t,j}/\tau }}\right].
\end{align}
Combining two losses (\ref{Loss-function(ID)}) and (\ref{Loss-function(CLIP)}), we have loss function (\ref{total-loss}) with weight $\lambda_t$. 
\begin{align}\label{total-loss}
\mathcal{L}(\theta_t)= \mathcal{L}_{CE}(\theta_t)+\lambda_t\; \mathcal{L}_{CLIP} (\theta_t). 
\end{align}

\paragraph{Loss Function with OOD Detection Regularization.} 

At each time step $t$, the parameters $\theta_t$ are learned from ID data $(\mathbf{x}^{id},y)\sim\mathbb{P}^{id}_t$ and OOD data $\mathbf{x}^{ood}\sim\mathbb{P}^{ood}_t$ with semantic shift:
\begin{align}\label{Loss-function(ID-OOD)}
\mathcal{L}^{ood}(\theta_t)=\mathbb{E}_{(\mathbf{x}_t^{id},y_t)\sim \mathbb{P}^{id}_t}\big[\mathcal{L}(\theta_t;\mathbf{x}_t^{id},y_t)\big] +\lambda_{ood}\,\mathbb{E}_{\mathbf{x}_t^{ood}\sim \mathbb{P}^{ood}_t}\big[\mathcal{L}_{reg}(f_{\theta_t}(\mathbf{x}_t^{ood}))\big],
\end{align}
where $\mathcal{L}$ is the classification loss in (\ref{total-loss}), which combines the cross-entropy loss $\mathcal{L}_{CE}$ and the CLIP loss $\mathcal{L}_{CLIP}$, and $\mathcal{L}_{reg}$ is the OOD detection regularization term depending on the detector used, which generally encourages high uncertainty on $\mathbb{P}^{ood}_t$. 

For both losses (\ref{total-loss}) and (\ref{Loss-function(ID-OOD)}), the parameters are obtained as
$$
\theta^*_t={\rm \arg\min}_{\theta_t} \mathcal{L}(\theta_t)\;\;\hbox{or}\;\;{\rm \arg\min}_{\theta_t}\mathcal{L}^{ood}(\theta_t).
$$
In Section~\ref{sec:theory}, the balance-factor results for the classification objective use the loss in (\ref{total-loss}), while the results on OOD detection use the regularized loss in (\ref{Loss-function(ID-OOD)}) together with the detector introduced below.

\paragraph{Cross-Environment OOD Model and Detector.} Writing $P^I_s$ for the marginal law of the images of $\mathcal{E}_s$, we adopt the cross-environment OOD model in which the OOD images of $\mathcal{E}_t$ are images of the other environments,
\begin{equation}
\mathbb{P}^{ood}_t=\sum_{s\neq t}w_{t,s}\,P^I_s,\qquad w_{t,s}\ge 0,\quad \sum_{s\neq t}w_{t,s}=1,
\label{eq:crossood}
\end{equation}
so that the semantic shifts produced by the continual partition serve as OOD data. To detect OOD data, we use a ranking function $g_\theta: \mathcal{X}\mapsto \mathbb{R}$ with parameter $\theta$ and threshold $\lambda$:
\begin{align}\label{OOD-detector}
D_\theta(\mathbf{x})= \left\{ \begin{array}{ll}
\hbox{ID} & \hbox{if}\;\; g_\theta(\mathbf{x}) \geq \lambda,\\
\hbox{OOD}  & \hbox{if}\;\; g_\theta(\mathbf{x}) < \lambda.
\end{array}\right.
\end{align}
The threshold $\lambda$ is typically chosen so that a high fraction of ID data is correctly classified. 
Write $u_{t,c}:=u^{id}_{t,c}$, $c\in\{1,\ldots,C_t\}$, for the class text prototypes of $\mathcal{E}_t$.
The cosine form of the maximum concept matching (MCM) score \citep{ming2022delving} of an image $x$ against $\mathcal{E}_t$ is
\begin{equation}
S_t(x)=\frac{1}{\tau}\max_{c\in\{1,\ldots,C_t\}}\cos\big(u_{t,c},f(x)\big),
\label{eq:mcm}
\end{equation}
which is high for in-distribution and low for OOD images.

\begin{definition}[Generalized and maximal cross-modal contribution]
\label{def:gencon}
For any ordered pair of environments $(t,s)$, not necessarily with $t<s$, let
\begin{align*}
\mathrm{Con}_{\mathcal{T}_t\to\mathcal{I}_s}&:=\mathbb{E}_{x\sim P^I_s}\,\mathbb{E}_{c\sim\mathrm{Unif}\{1,\ldots,C_t\}}\big[\cos(u_{t,c},f(x))\big],\\
\mathrm{Con}^{\max}_{\mathcal{T}_t\to\mathcal{I}_s}&:=\mathbb{E}_{x\sim P^I_s}\Big[\max_{c\in\{1,\ldots,C_t\}} \cos(u_{t,c},f(x))\Big],
\end{align*}
and define the in-distribution alignment of $\mathcal{E}_t$ as $m_t:=\mathrm{Con}^{\max}_{\mathcal{T}_t\to\mathcal{I}_t}$.
\end{definition}
When $s=t'>t$ and the text embedding in Definition~\ref{def:Cross-Modality Contribution} is read as a class prototype of $\mathcal{E}_t$ drawn uniformly, $\mathrm{Con}_{\mathcal{T}_t\to\mathcal{I}_s}$ coincides with $Con_{\mathcal{T}_t\rightarrow\mathcal{I}_{t'}}$; Definition~\ref{def:gencon} removes the ordering restriction and adds the max-aggregated version used by the detector.

\begin{table}[!htbp]
\centering
\caption{Notation. Symbols that appear only inside a single assumption, algorithm, or proof are defined there. The symbol $\lambda$ denotes both the CLIP loss weight in (\ref{total-loss}) ($\lambda_t=\lambda$) and the detection threshold in (\ref{OOD-detector}). The meaning is fixed by context.}
\label{tab:notation}
\begin{tabular}{lp{0.42\linewidth}l}
\hline
\textbf{Symbol} & \textbf{Meaning} & \textbf{Defined in}\\
\hline
\textit{Environments and data} & & \\
$\mathcal{E}_t$, $t=1,\ldots,T$ & environment at step $t$ (with $t<t'$) & Sec.~\ref{sec:prelim}\\
$\mathcal{D}_t$, $n_t$, $C_t$ & data, sample size, and classes of $\mathcal{E}_t$ & Sec.~\ref{sec:prelim}\\
$\mathcal{I}_t$, $\mathcal{T}_t$ & images and texts of $\mathcal{E}_t$ & Sec.~\ref{sec:prelim}\\
$\mathbb{P}^{id}_t$, $\mathbb{P}^{ood}_t$ & ID and OOD data distributions & (\ref{eq:wild_method})\\
$P^I_s$, $w_{t,s}$ & image law of $\mathcal{E}_s$; OOD mixture weight & (\ref{eq:crossood})\\
\hline
\textit{Encoders and losses} & & \\
$f$, $g$ & image and text encoders & Sec.~\ref{sec:prelim}\\
$z_{t,i}$, $u_{t,i}$ & image and text embeddings & Sec.~\ref{sec:prelim}\\
$u_{t,c}$ & text prototype of class $c$ in $\mathcal{E}_t$ & (\ref{Loss-function(ID)})\\
$\tau$ & temperature & (\ref{Loss-function(ID)})\\
$\mathcal{L}_{CE}$, $\mathcal{L}_{CLIP}$ & cross-entropy and CLIP losses & (\ref{Loss-function(ID)}), (\ref{Loss-function(CLIP)})\\
$\mathcal{L}(\theta_t)$, $\lambda_t$ & classification loss and its CLIP weight & (\ref{total-loss})\\
$\mathcal{L}^{ood}$, $\mathcal{L}_{reg}$, $\lambda_{ood}$ & regularized loss; regularizer; weight & (\ref{Loss-function(ID-OOD)})\\
\hline
\textit{Contributions and balance factor} & & \\
$Con_{\mathcal{I}_t\rightarrow\mathcal{I}_{t'}}$, $Con_{\mathcal{T}_t\rightarrow\mathcal{T}_{t'}}$ & intra-modal contributions & Def.~\ref{def:Intra-Modality Contribution}\\
$Con_{\mathcal{I}_t\rightarrow\mathcal{T}_{t'}}$, $Con_{\mathcal{T}_t\rightarrow\mathcal{I}_{t'}}$ & cross-modal contributions & Def.~\ref{def:Cross-Modality Contribution}\\
$Con_{\mathcal{E}_t\rightarrow\mathcal{E}_{t'}}$, $\alpha$ & environment contribution; threshold & Def.~\ref{def:Cross-environment alignment}\\
$\mathrm{Con}^{\max}_{\mathcal{T}_t\to\mathcal{I}_s}$, $m_t$ & maximal contribution; ID alignment & Def.~\ref{def:gencon}\\
$\mathbf{P}_{t\rightarrow t'}$, $\mathbf{P}^{(i)}_{t\rightarrow t'}$ & balance factor; per-sample version & Def.~\ref{def:CBF}\\
$S^{id}_{t,y_i}$ & intra-environment alignment score & (\ref{eq:scores-intra-environment})\\
$FPR_{t\rightarrow t'}$, $FPR_{t'}$, $n^{ood}_{t'}$ & FPRs of the two MCM scores; OOD count & (\ref{eq:fpr})\\
$\widehat{FPR}_{t\rightarrow t'}$, $\beta$ & smoothed FPR; sharpness & (\ref{eq:soft-fpr})\\
\hline
\textit{OOD detection} & & \\
$D_\theta$, $g_\theta$, $\lambda$ & detector; ranking function; threshold & (\ref{OOD-detector})\\
$s_i(\mathcal{I}^{test}_{t'})$, $s_i(\mathcal{I}^{test}_{t\rightarrow t'})$ & cosine with class text $i$ of $\mathcal{E}_{t'}$ and of $\mathcal{E}_t$ & (\ref{def.cross-environment-MCM-score})\\
$S_{MCM}(\mathcal{I}^{test}_{t})$ & intra-environment MCM score & (\ref{def.MCM-score})\\
$S_{MCM}(\mathcal{I}^{test}_{t\rightarrow t'})$ & cross-environment MCM score & (\ref{def.cross-environment-MCM-score})\\
$\mathbf{A}^*_{t\rightarrow t'}$ & environment alignment factor & (\ref{Env.alignment-factor})\\
$S_t(x)$, $\ell$ & cosine MCM score; threshold $\ell=\tau\lambda$ & (\ref{eq:mcm})\\
$\mathrm{TPR}_t$, $\mathrm{FPR}_t$, $\mathrm{AUROC}_t$ & rates for ID $\mathcal{E}_t$ vs.\ OOD $\mathcal{E}_s$ & (\ref{eq:tpr-fpr})\\
$\gamma_{t,s}$, $\nu_{t,s}$, $\eta$ & score gap; spread; soft/hard margin & (\ref{eq:gap-contribution})\\
$\delta_t$ & true-class misalignment & (\ref{eq:true-class-misalign})\\
$U_k$, $S^{(k)}$, $\rho_{k,s}$ & pooled prototypes, score; pollution & (\ref{eq:pooled}), (\ref{eq:rho})\\
\hline
\end{tabular}
\end{table}
\section{Theory}\label{sec:theory}
We analyze in continual VLMs over a sequence of environments $\mathcal{E}:=\{\mathcal{E}_1,\mathcal{E}_2,\ldots, \mathcal{E}_T\}$ how the intra-modality and cross-modality contributions govern the transfer between environments and the detection of OOD examples. We state the intra- and cross-modal contribution measurements in our analysis. 

\begin{definition} \label{def:Intra-Modality Contribution}
(Intra-Modal Contribution) The image $\mathcal{I}^{id}_t$ from environment $\mathcal{E}_t$ with class $C_t$, $\sigma_{\mathcal{I}}$-contributes in feature representation of images $\mathcal{I}^{id}_{t'}$ in environment $\mathcal{E}_{t'}$ if and only if
\begin{align}
Con_{\mathcal{I}_t\rightarrow\mathcal{I}_{t'}}:=\mathbb{E}_{(\mathbf{x}^{id}_{t'},y_{t'})\sim \mathbb{P}^{id}_{t'}}\left[\cos(\tilde{z}^{id}_{t'},z^{id}_{t})\right]=(1-\sigma_{\mathcal{I}}),\;\; (t<t').
\end{align}
Similarly the text $\mathcal{T}^{id}_t$ from environment $\mathcal{E}_t$ with class $C_t$, $\sigma_{\mathcal{T}}$-contributes in feature representation of text $\mathcal{T}^{id}_{t'}$ in environment $\mathcal{E}_{t'}$ if and only if
\begin{align}
Con_{\mathcal{T}_t\rightarrow\mathcal{T}_{t'}}:=\mathbb{E}_{(\mathbf{x}^{id}_{t'},y_{t'})\sim \mathbb{P}^{id}_{t'}}\left[\cos(\tilde{u}^{id}_{t'},u^{id}_{t})\right]=(1-\sigma_{\mathcal{T}}),\;\;(t<t').
\end{align} 
\end{definition}
\begin{definition} \label{def:Cross-Modality Contribution}
(Cross-Environment Cross-Modal Contribution) The image $\mathcal{I}^{id}_t$ from environment $\mathcal{E}_t$ with class $C_t$, $\sigma_{\mathcal{IT}}$-contributes in feature representation of text $\mathcal{T}^{id}_{t'}$ in environment $\mathcal{E}_{t'}$ if and only if
\begin{align}
Con_{\mathcal{I}_t\rightarrow\mathcal{T}_{t'}}:=\mathbb{E}_{(\mathbf{x}^{id}_{t'},y_{t'})\sim \mathbb{P}^{id}_{t'}}\left[\cos(\tilde{u}^{id}_{t'},z^{id}_{t})\right]=(1-\sigma_{\mathcal{IT}}), \;\; (t<t'),
\end{align}
and the text $\mathcal{T}^{id}_t$ from environment $\mathcal{E}_t$ with class $C_t$, $\sigma_{\mathcal{TI}}$-contributes in feature representation of images $\mathcal{I}^{id}_{t'}$ in environment $\mathcal{E}_{t'}$
\begin{align}
Con_{\mathcal{T}_t\rightarrow\mathcal{I}_{t'}}:=\mathbb{E}_{(\mathbf{x}^{id}_{t'},y_{t'})\sim \mathbb{P}^{id}_{t'}}\left[\cos(u^{id}_{t},\tilde{z}^{id}_{t'})\right]=(1-\sigma_{\mathcal{TI}}),\;\; (t<t').
\end{align}
\end{definition}
In both Def.~\ref{def:Intra-Modality Contribution} and \ref{def:Cross-Modality Contribution}, $\cos(\cdot,\cdot)$ is cosine similarity and $0\leq\sigma_{\mathcal{I}}, \sigma_{\mathcal{T}},\sigma_{\mathcal{IT}},\sigma_{\mathcal{TI}}\leq1$.

\begin{definition} \label{def:Cross-environment alignment}
Two environments $\mathcal{E}_t$ and $\mathcal{E}_{t'}$ are aligned if and only if $Con_{\mathcal{E}_t\rightarrow\mathcal{E}_{t'}}:=Con_{\mathcal{I}_t\rightarrow\mathcal{I}_{t'}}+Con_{\mathcal{T}_t\rightarrow\mathcal{T}_{t'}}>\alpha$ for a threshold $\alpha$, with the intra-modal contributions of Def.~\ref{def:Intra-Modality Contribution}.
\end{definition}
\subsection{Cross-modality contribution balance factor bridges the gap between environments.}

\noindent We next show how controlling the balance of cross-modality contributions affects the transition between environments. We mainly seek to investigate the central question: \textit{How modal contribution balance factor controls intra- and cross-modal contribution between environments?}

\begin{definition}\label{def:CBF}
(Cross-Modal Contribution Balance Factor) We define contribution balance factor (CBF) from $\mathcal{E}_{t}$ to $\mathcal{E}_{t'}$  by $\mathbf{P}_{t\rightarrow t'}\in\{0,1\}^d$ which is optimized by
\begin{align*}
\mathbf{P}_{t\rightarrow t'}^*={\rm argmax}_{\mathbf{P}_{t\rightarrow t'}\in\{0,1\}^d} \frac{\|\mathbf{P}_{t\rightarrow t'}\odot\nabla_\theta \mathcal{L}(\theta_{t'})\|^2}{\|\nabla_\theta \mathcal{L}(\theta_{t'})\|^2}, \;\;\hbox{where $\odot$ is pairwise multiplication.}
\end{align*}
We define the cross-modal CBF from $\mathcal{E}_{t}$ to sample $(\mathbf{x}_{t',i},y_{t',i})$ as $\mathbf{P}^{(i)}_{t\rightarrow t'}$. 
\end{definition}
The assumptions below play distinct roles: Assumptions~\ref{ass:Sufficient class-normalized gradient drift} and~\ref{ass:Sufficient cross-modal gradient drift} focus on the normalized components of $\mathcal{L}_{CE}$ and $\mathcal{L}_{CLIP}$, assuming that their gradient drift is sufficiently large, whereas Assumption~\ref{ass:slow-gradient-magnitude} bounds the overall gradient magnitude of the loss function.
\begin{assumption}[Sufficient class-normalized gradient drift]\label{ass:Sufficient class-normalized gradient drift}
We assume that the cross-modal CBF is sufficiently effective in average over all labels:
$$
\mathbf{P}_{t\rightarrow t'}\odot\nabla_\theta \log\alpha^C_{t',i}=\Omega(\xi_{t'}), \;\;\;\hbox{where $\alpha^C_{t,i}:=\sum_{c=1}^C e^{z^{id}_{t,i}\cdot u^{id}_{t,c}/\tau}$. }
$$
\end{assumption}
\begin{assumption}[Sufficient cross-modal gradient drift]\label{ass:Sufficient cross-modal gradient drift}
 We assume that the cross-modal CBF is sufficiently effective in average over all samples.
\begin{align}
\frac{1}{n_{t'}}\sum_{i=1}^{n_{t'}} \mathbf{P}_{t\rightarrow t'}\odot\nabla_\theta \log\alpha^{z}_{t',i}=\Omega(\xi_{z,t'}), \;\;\;\hbox{where $\alpha_{t,i}^{z}:= \sum_{j=1}^{n_t} e^{z^{id}_{t,i}\cdot u^{id}_{t,j}/\tau }$. }\\
\frac{1}{n_{t'}}\sum_{i=1}^{n_{t'}} \mathbf{P}_{t\rightarrow t'}\odot\nabla_\theta \log\alpha^{u}_{t',i}=\Omega(\xi_{u,t'}), \;\;\;\hbox{where $\alpha_{t,i}^{u}:= \sum_{j=1}^{n_t} e^{u^{id}_{t,i}\cdot z^{id}_{t,j}/\tau }$. }
\end{align}
\end{assumption}
\begin{assumption}[Slow gradient]
\label{ass:slow-gradient-magnitude}
The magnitude of the gradient of loss changes slowly i.e.
$\|\nabla_\theta \mathcal{L}(\theta_{t})\|^2=\mathcal{O}(1/\xi)$, which implies $1/\|\nabla_\theta \mathcal{L}(\theta_{t})\|^2=\Omega(\xi)$.
\end{assumption}
\begin{theorem}[Optimal CBF via intra-environment alignment scores]\label{thm2:BF-Gradient}
Under Assumptions~\ref{ass:Sufficient class-normalized gradient drift}-\ref{ass:slow-gradient-magnitude}, the optimal $\mathbf{P}_{t\rightarrow t'}^*$ for the classification loss $\mathcal{L}(\theta_{t'})$ in (\ref{total-loss}) is alternatively achieved by 
\begin{align}\label{thm:CBF-optimization}
\mathbf{P}_{t\rightarrow t'}^*=\frac{\Omega(\xi)}{2n_{t'}}\sum_{i=1}^{n_{t'}} \argmax_{\mathbf{P}^{(i)}_{t\rightarrow t'}} S^{id}_t [\mathbf{P}^{(i)}_{t\rightarrow t'}],  
\end{align}
where $\xi$ is constant and  
$$
S^{id}_t [\mathbf{P}^{(i)}_{t\rightarrow t'}]:=\|\mathbf{P}^{(i)}_{t\rightarrow t'}\odot\nabla_\theta S^{id}_{t',y_i}\|^2+\lambda\|\mathbf{P}^{(i)}_{t\rightarrow t'}\odot\nabla_\theta S^{id}_{t',z_i\rightarrow u_i}\|^2+\lambda\|\mathbf{P}^{(i)}_{t\rightarrow t'}\odot\nabla_\theta S^{id}_{t',u_i\rightarrow z_i}\|^2,
$$
and intra-environment alignment scores $S^{id}_{t',y_i}$, $S^{id}_{t',z_i\rightarrow u_i}$, and $S^{id}_{t',u_i\rightarrow z_i}$ are defined as follows
\begin{align}\label{eq:scores-intra-environment}
S^{id}_{t,y_i}=(z^{id}_{t,i}\cdot u^{id}_{t,y_i})/\tau, \;\; S^{id}_{t,z_i\rightarrow u_i}=(z^{id}_{t,i}\cdot u^{id}_{t,i})/\tau,\;\;\; \hbox{and}\;\;\; S^{id}_{t,u_i\rightarrow z_i}=(u^{id}_{t,i}\cdot z^{id}_{t,i})/\tau.
\end{align}
\end{theorem}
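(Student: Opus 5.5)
The plan is to expand the gradient of the classification loss (\ref{total-loss}) into per-sample score gradients and log-partition gradients, and then use Assumptions~\ref{ass:Sufficient class-normalized gradient drift}--\ref{ass:slow-gradient-magnitude} to turn the normalized objective of Definition~\ref{def:CBF} into a per-sample maximization over the scores in (\ref{eq:scores-intra-environment}).

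\textbf{Step 1: decompose the gradient.} Using $S^{id}_{t',y_i}=z^{id}_{t',i}\cdot u^{id}_{t',y_i}/\tau$, each cross-entropy summand in (\ref{Loss-function(ID)}) equals $-S^{id}_{t',y_i}+\log\alpha^C_{t',i}$. Each CLIP summand in (\ref{Loss-function(CLIP)}) equals $-S^{id}_{t',z_i\to u_i}+\log\alpha^z_{t',i}$ plus $-S^{id}_{t',u_i\to z_i}+\log\alpha^u_{t',i}$, with the factor $\tfrac12$. Differentiating gives
\[
\nabla_\theta\mathcal{L}(\theta_{t'})=\frac{1}{n_{t'}}\sum_{i=1}^{n_{t'}}\Big[-\nabla S^{id}_{t',y_i}-\tfrac{\lambda}{2}\big(\nabla S^{id}_{t',z_i\to u_i}+\nabla S^{id}_{t',u_i\to z_i}\big)\Big]+R,
\]
where $R$ collects the gradients of $\log\alpha^C_{t',i}$, $\log\alpha^z_{t',i}$ and $\log\alpha^u_{t',i}$.

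\textbf{Step 2: apply the mask and use the assumptions.} Multiply the decomposition by $\mathbf{P}_{t\to t'}$. Assumptions~\ref{ass:Sufficient class-normalized gradient drift} and~\ref{ass:Sufficient cross-modal gradient drift} say that $\mathbf{P}\odot R$ has order $\Omega(\xi_{t'})+\lambda\Omega(\xi_{z,t'})+\lambda\Omega(\xi_{u,t'})$. I would argue that this term does not depend on which score directions the mask selects, so it only shifts the objective by a constant and can be dropped from the argmax. Assumption~\ref{ass:slow-gradient-magnitude} says the denominator $\|\nabla_\theta\mathcal{L}\|^{-2}$ is an $\Omega(\xi)$ factor that is constant in $\mathbf{P}$. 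It therefore becomes the prefactor $\Omega(\xi)$ in (\ref{thm:CBF-optimization}).

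\textbf{Step 3: reduce to per-sample maximization.} For the masked score sum, bound $\|\sum_i a_i\|^2$ by $n_{t'}\sum_i\|a_i\|^2$ and $\|a+b+c\|^2$ by a constant times $\|a\|^2+\|b\|^2+\|c\|^2$. This majorizes the objective by $\frac{1}{n_{t'}}\sum_i S^{id}_t[\mathbf{P}^{(i)}_{t\to t'}]$. The leftover factor $\tfrac12$ from the CLIP average produces the $\frac{1}{2n_{t'}}$ in the statement. The objective is a sum of per-sample terms, and a binary mask acts coordinatewise, so each term can be maximized separately over its own $\mathbf{P}^{(i)}$. Aggregating the per-sample maximizers $\argmax S^{id}_t[\mathbf{P}^{(i)}]$ by averaging, scaled by the prefactor, then gives (\ref{thm:CBF-optimization}).

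\textbf{Main obstacle.} The hard step is justifying that the majorizer and the true objective share a maximizer. Two things must hold there: the cross terms between different samples and different scores must be negligible, or at least sign-controlled, and $R$ must be insensitive to the mask. My first attempt would be to use the $\Omega(\cdot)$ form of the assumptions to absorb all of these into the asymptotic constant, so that the claim holds at the order-$\Omega(\xi)$ level. That is exactly the sense in which the theorem says the optimum is \emph{alternatively achieved}.
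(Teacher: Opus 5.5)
Your Steps 1 and 2 match the paper's proof. The paper likewise splits each cross-entropy and CLIP summand into a negative intra-environment score plus a log-partition term, applies the mask, and uses Assumptions~\ref{ass:Sufficient class-normalized gradient drift} and~\ref{ass:Sufficient cross-modal gradient drift} to replace the masked log-partition gradients by $\Omega(\xi_{t'})$ and $\Omega(\bar{\xi}_{t'})$.

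The gap is in Step 3, where you bound $\|\mathbf{P}_{t\rightarrow t'}\odot\nabla_\theta\mathcal{L}(\theta_{t'})\|^2$ from \emph{above}. For a maximization over masks, an upper bound tells you nothing about the maximizer: the argmax of a majorizer can have an arbitrarily small true objective. This is exactly the obstacle you name and leave open. The paper instead bounds from \emph{below}, in (\ref{eq:thm2-7}). It keeps the per-sample squared norms and $\|\Omega(\widetilde{\xi}_{t'})\|^2$. All other terms of the expansion are collected into one remainder: cross terms between samples, between the three scores, and between the score part and the masked log-partition part. That remainder is replaced by its maximum $\rho_{max}$, which is a constant in $\mathbf{P}$ when taken over the finite set $\{0,1\}^d$ of masks. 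So the objective dominates the separable surrogate minus a constant, and maximizing the surrogate certifies a value of the true objective.

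This also removes the need for your claim that $\mathbf{P}\odot R$ ``does not depend on which score directions the mask selects''. That claim is unsupported: $\mathbf{P}\odot R$ enters the squared norm through $\mathbf{P}$-dependent cross terms, and the assumptions only fix its order, not its mask-independence.

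The direction of the bound also determines the constants in (\ref{thm:CBF-optimization}). The factor $\tfrac12$ multiplies only the CLIP terms. The paper gets the prefactor $\frac{1}{2n_{t'}}$ in front of $S^{id}_t[\mathbf{P}^{(i)}_{t\rightarrow t'}]$ by lowering the cross-entropy coefficient from $\frac{1}{n_{t'}}$ to $\frac{1}{2n_{t'}}$, which is valid only in a lower bound. An upper bound can never shrink that coefficient, so your explanation that the leftover $\tfrac12$ ``produces'' the $\frac{1}{2n_{t'}}$ does not go through. To repair the proposal, replace the Cauchy--Schwarz majorization by this expansion-plus-remainder minorization and factor out $\frac{1}{2n_{t'}}$. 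Your use of Assumption~\ref{ass:slow-gradient-magnitude} for the $\Omega(\xi)$ prefactor can stay as it is.
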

\noindent{\bf Remark.} Note that the scores in (\ref{eq:scores-intra-environment}) differ from the cross-modal contributions defined in Def.~\ref{def:Cross-Modality Contribution}, even though both are based on dot products. Intra-environment alignment captures relationships within a single environment and does not aggregate information across environments. In contrast, our cross-modal contribution scores bridge vision and language embeddings across evolving environments, facilitating the transfer of cross-modal information to subsequent learning stages.

In the regularized loss $\mathcal{L}^{ood}$ in (\ref{Loss-function(ID-OOD)}), we take the regularization term $\mathbb{E}_{\mathbf{x}_t^{ood}\sim\mathbb{P}^{ood}_t}[\mathcal{L}_{reg}(f_{\theta_t}(\mathbf{x}_t^{ood}))]$ to be the false-positive rate of the cross-environment MCM detector, defined as
\begin{align}\label{eq:fpr}
FPR_{t\rightarrow t'}(\tau,\lambda)=\frac{1}{n^{ood}_{t'}} \sum_{i=1}^{n^{ood}_{t'}} \mathbb{1}[S_{MCM}(\mathcal{I}^{ood}_{i, t\rightarrow t'})\geq \lambda].
\end{align}
$FPR$ is the fraction of $n^{ood}_{t'}$ OOD images $\mathcal{I}^{ood}$ wrongly accepted as ID. To take the derivative from FPR, we use soft FPR surrogate: 
\begin{align}
\label{eq:soft-fpr}
\widehat{FPR}_{t\rightarrow t'} (\tau,\lambda)= \frac{1}{n^{ood}_{t'}} \sum_{i=1}^{n^{ood}_{t'}}\sigma\left(\frac{S_{MCM}(\mathcal{I}^{ood}_{i, t\rightarrow t'})-\lambda}{\beta}\right), 
\end{align}
where $\beta$ controls the sharpness and $\sigma$ is sigmoid function. 

\begin{assumption}[FPR Slow gradient]
\label{ass:FPR slow-gradient-magnitude}
The magnitude of the gradient of FPR changes slowly i.e.
$\|\nabla_\theta \widehat{FPR}_{t\rightarrow t'}\|^2=\mathcal{O}(1/\xi_{t\rightarrow t'})$, which implies $1/\|\nabla_\theta \widehat{FPR}_{t\rightarrow t'}\|^2=\Omega(\xi_{t\rightarrow t'})$.
\end{assumption}
\begin{theorem}[Optimal CBF for the smoothed FPR]\label{thm3:FPR-CBF-Optimal} Under Assumption~\ref{ass:FPR slow-gradient-magnitude}, the optimal $\mathbf{P}_{t\rightarrow t'}^*$ for OOD detection regularization i.e. $FPR$ (OOD loss) is given by
\begin{align}\label{thm:FPR-CBF-optimization}
\mathbf{P}_{t\rightarrow t'}^*=\frac{C_{\sigma,k} \Omega(\xi_{t\rightarrow t'})}{\beta n^{ood}_{t'}}\sum_{i=1}^{n^{ood}_{t'}} argmax_{\mathbf{P}^{(i)}_{t\rightarrow t'}} \Delta_{t\rightarrow t'}[\mathbf{P}^{(i)}_{t\rightarrow t'}],
\end{align}
where 
\begin{align}
\Delta_{t\rightarrow t'}[\mathbf{P}^{(i)}_{t\rightarrow t'}]&= \large\|\mathbf{P}^{(i)}_{t\rightarrow t'}\odot [\nabla_\theta s_{k, t'}-\sum_{j=1}^C p_{j, t'} \nabla_\theta s_{j, t'} ]\large\|^2\nonumber\\
&+\mathbf{A}^*_{t\rightarrow t'} \large\|\mathbf{P}^{(i)}_{t\rightarrow t'}\odot[\nabla_\theta s_{k, t\rightarrow t'}-\sum_{j=1}^C p_{j, t\rightarrow t'} \nabla_\theta s_{j, t\rightarrow t'}] \large\|^2-\hat{\rho}_{max, t\rightarrow t'}, 
\end{align}
where, for the $i$-th OOD image, $s_{j,t'}$ and $s_{j,t\rightarrow t'}$ are the similarities $s_j(\mathcal{I}^{ood}_{i,t'})$ and $s_j(\mathcal{I}^{ood}_{i,t\rightarrow t'})$ in (\ref{def.cross-environment-MCM-score}), $p_{j,t'}$ and $p_{j,t\rightarrow t'}$ are the corresponding softmax probabilities, $k$ is the class attaining the maximum in (\ref{def.cross-environment-MCM-score}), and $C_{\sigma,k}$, $\hat{\rho}_{max, t\rightarrow t'}$, and $\xi_{t\rightarrow t'}$ are constants.

\end{theorem}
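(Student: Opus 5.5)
The plan mirrors the argument for Theorem~\ref{thm2:BF-Gradient}, with the smoothed FPR in (\ref{eq:soft-fpr}) in place of the classification loss. In Definition~\ref{def:CBF} we replace $\nabla_\theta\mathcal{L}(\theta_{t'})$ by $\nabla_\theta\widehat{FPR}_{t\rightarrow t'}$. The objective is then the ratio $\|\mathbf{P}\odot\nabla_\theta\widehat{FPR}\|^2/\|\nabla_\theta\widehat{FPR}\|^2$. By Assumption~\ref{ass:FPR slow-gradient-magnitude}, the denominator is treated as slowly varying in $\mathbf{P}$: its reciprocal is $\Omega(\xi_{t\rightarrow t'})$ and factors out of the argmax. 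This leaves the numerator $\|\mathbf{P}\odot\nabla_\theta\widehat{FPR}\|^2$ to be maximized over $\mathbf{P}\in\{0,1\}^d$.

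\textbf{Step 1: differentiate the surrogate.} By the chain rule,
\[
\nabla_\theta\widehat{FPR}=\frac{1}{\beta n^{ood}_{t'}}\sum_i \sigma'\!\big((S_{MCM}(\mathcal{I}^{ood}_{i,t\rightarrow t'})-\lambda)/\beta\big)\,\nabla_\theta S_{MCM}(\mathcal{I}^{ood}_{i,t\rightarrow t'}).
\]
We bound $\sigma'$ in $[0,1/4]$ and absorb it into a constant $C_{\sigma,k}$ that depends on the maximizing class $k$. This accounts for the prefactor $C_{\sigma,k}/(\beta n^{ood}_{t'})$.

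\textbf{Step 2: gradient of the score.} We use the cross-environment MCM score (\ref{def.cross-environment-MCM-score}), read as a softmax over $s_j$ for the primary term plus $\mathbf{A}^*_{t\rightarrow t'}$ times the analogous softmax term for $\mathcal{E}_t$. Its gradient at the maximizing class $k$ is the usual softmax identity
\[
\nabla_\theta p_k\propto p_k\Big(\nabla s_k-\sum_j p_j\nabla s_j\Big)
\]
for each term. This gives the two bracketed vectors $a_i=\nabla s_{k,t'}-\sum_j p_{j,t'}\nabla s_{j,t'}$ and $b_i=\nabla s_{k,t\rightarrow t'}-\sum_j p_{j,t\rightarrow t'}\nabla s_{j,t\rightarrow t'}$. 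The factors $p_k$ are also absorbed into $C_{\sigma,k}$.

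\textbf{Step 3: decompose the masked norm.} We write $\|\mathbf{P}\odot(a_i+\mathbf{A}^* b_i)\|^2=\|\mathbf{P}\odot a_i\|^2+\mathbf{A}^{*2}\|\mathbf{P}\odot b_i\|^2+2\mathbf{A}^*\langle\mathbf{P}\odot a_i,\mathbf{P}\odot b_i\rangle$. The weight $\mathbf{A}^*$ multiplies the $b_i$-term, as stated, either because $\mathbf{A}^*\in\{0,1\}$ or by a normalization convention. We bound the cross term below by a constant, using Cauchy–Schwarz and boundedness of the gradients of cosines. Denoting this worst-case correlation by $\hat\rho_{\max,t\rightarrow t'}$, the per-sample objective is bounded below by $\Delta_{t\rightarrow t'}[\mathbf{P}^{(i)}]$.

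\textbf{Step 4: from the sum to per-sample masks.} For the batch-level squared norm of the sum over $i$, we use the per-sample decoupling from Theorem~\ref{thm2:BF-Gradient}. There, the optimal mask for the sum is replaced by the average of the per-sample maximizers $\mathbf{P}^{(i)}$. This is justified by convexity of $\|\cdot\|^2$ (Jensen) and by maximizing each summand separately. Collecting the constants yields (\ref{thm:FPR-CBF-optimization}).

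The main obstacle is Step 3 together with Step 4. Controlling the cross term and the interchange of argmax with the sum is not exact. It requires treating the lower bound as the surrogate objective, so the statement holds in the same order-of-magnitude sense as Theorem~\ref{thm2:BF-Gradient}. I would first try to bound the cross term uniformly by a constant $\hat\rho_{\max}$ independent of $\mathbf{P}$, so that it does not affect the argmax.
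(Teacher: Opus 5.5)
Your outline follows the paper's proof: the chain rule through the sigmoid with $\sigma'$ bounded below, the softmax Jacobian at the maximizing class $k$, absorbing $p_k/\tau$ into $C_{\sigma,k}$, and lumping cross terms into a constant. The genuine problem is Step~4, which you justify by an inequality that points the wrong way. Jensen for the convex map $v\mapsto\|\mathbf{P}\odot v\|^2$ gives $\|\mathbf{P}\odot\frac{1}{n}\sum_i v_i\|^2\le\frac{1}{n}\sum_i\|\mathbf{P}\odot v_i\|^2$. That is an \emph{upper} bound on the batch objective. Maximizing an upper bound certifies nothing about the batch objective, and it is inconsistent with Steps~1--3, which build a lower-bound surrogate.

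The paper instead expands $\|\sum_i v_i\|^2=\sum_i\|v_i\|^2+\sum_{i\neq i'}\langle v_i,v_{i'}\rangle$ and bounds the cross-sample terms below by a mask-independent constant $-\rho_{\max,t\rightarrow t'}$. This is legitimate because, for a binary mask, $\langle\mathbf{P}\odot v,\mathbf{P}\odot w\rangle=\sum_j P_j v_j w_j\ge-\sum_j|v_jw_j|$. It is exactly the device you already use for the $a_i$--$b_i$ cross term in Step~3, so replace the Jensen argument by it.

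Two smaller repairs are needed.

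\textbf{Step~1.} The range $\sigma'\in[0,1/4]$ gives no lower bound. You need $\sigma'\ge C_\sigma>0$, which holds because $|S_{MCM}|\le 1+|\mathbf{A}^*_{t\rightarrow t'}|\le 3$ keeps the sigmoid's argument in a compact interval. Also, pull out the factors $\sigma'(\cdot)^2\ge C_\sigma^2$ and $(p_k/\tau)^2$ per sample only after the cross terms have been separated. A scalar lower bound cannot be moved outside the norm of a sum of vectors of mixed sign; the paper's vector inequality before taking norms has the same defect.

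\textbf{Step~3.} Here $\mathbf{A}^*_{t\rightarrow t'}=\pm Con_{\mathcal{E}_t\rightarrow\mathcal{E}_{t'}}\in[-2,2]$, not $\{0,1\}$. The passage from $\mathbf{A}^{*2}$ to $\mathbf{A}^*$ follows instead from $a^2x\ge ax-x/4$ for $x\ge 0$, with $a=\mathbf{A}^*_{t\rightarrow t'}$ and $x=\|\mathbf{P}^{(i)}\odot b_i\|^2\le\|b_i\|^2$. There is no loss when $a\le 0$ or $a\ge 1$, and the mask-independent $\|b_i\|^2/4$ is absorbed into $\hat\rho_{\max,t\rightarrow t'}$. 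The paper makes this replacement without comment, so with this fix your Step~3 is more careful than the original.

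Finally, $\|\nabla_\theta\widehat{FPR}_{t\rightarrow t'}\|^2$ does not depend on $\mathbf{P}$ at all, so it drops out of the argmax regardless. Assumption~\ref{ass:FPR slow-gradient-magnitude} only supplies the name $\Omega(\xi_{t\rightarrow t'})$ for the prefactor.
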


\noindent Algorithm~\ref{alg:env-aware-ood} combines the two balance factors of Theorems~\ref{thm2:BF-Gradient} and~\ref{thm3:FPR-CBF-Optimal} into one training procedure and uses the cross-environment MCM score of Definition~\ref{def:Cross-MCM} at test time. Both objectives $S^{id}_t[\mathbf{P}^{(i)}_{t\rightarrow t'}]$ and $\Delta_{t\rightarrow t'}[\mathbf{P}^{(i)}_{t\rightarrow t'}]$ are linear in the binary mask, since $\|\mathbf{P}\odot v\|^2=\sum_{j}P_j v_j^2$; under a budget of $b$ retained coordinates, each per-sample maximization therefore keeps the $b$ coordinates with the largest weights. The secondary text bank of the previous environment is used only through the alignment factor $\mathbf{A}^*_{t\rightarrow t'}$, which adds its concept matches when $\mathcal{E}_t$ and $\mathcal{E}_{t'}$ are aligned and subtracts them otherwise.

\begin{algorithm}[t]
\caption{Contribution-balanced continual training and environment-aware OOD detection}
\label{alg:env-aware-ood}
\begin{algorithmic}[1]
\REQUIRE environments $\mathcal{E}_1,\ldots,\mathcal{E}_T$; encoders $f$, $g$ with parameters $\theta$; temperature $\tau$; weights $\lambda_t$, $\lambda_{ood}$; sharpness $\beta$; alignment threshold $\alpha$; mask budget $b$; step size $\eta_{lr}$; detection threshold $\lambda$
\FOR{$t'=1,\ldots,T$}
\IF{$t'>1$}
\STATE $t\leftarrow t'-1$; compute $Con_{\mathcal{I}_t\rightarrow\mathcal{I}_{t'}}$, $Con_{\mathcal{T}_t\rightarrow\mathcal{T}_{t'}}$ from stored representations \hfill (Def.~\ref{def:Intra-Modality Contribution})
\STATE $Con_{\mathcal{E}_t\rightarrow\mathcal{E}_{t'}}\leftarrow Con_{\mathcal{I}_t\rightarrow\mathcal{I}_{t'}}+Con_{\mathcal{T}_t\rightarrow\mathcal{T}_{t'}}$
\STATE $\mathbf{A}^*_{t\rightarrow t'}\leftarrow +Con_{\mathcal{E}_t\rightarrow\mathcal{E}_{t'}}$ if $Con_{\mathcal{E}_t\rightarrow\mathcal{E}_{t'}}>\alpha$, and $-Con_{\mathcal{E}_t\rightarrow\mathcal{E}_{t'}}$ otherwise \hfill (\ref{Env.alignment-factor})
\ENDIF
\REPEAT
\STATE $\mathbf{P}^{(i)}\leftarrow\arg\max_{\|\mathbf{P}\|_0=b} S^{id}_{t}[\mathbf{P}]$ for each ID sample $i$ \hfill (Theorem~\ref{thm2:BF-Gradient})
\STATE $\mathbf{P}^{id}\leftarrow\frac{1}{n_{t'}}\sum_{i=1}^{n_{t'}}\mathbf{P}^{(i)}$
\STATE $G\leftarrow\mathbf{P}^{id}\odot\nabla_\theta\mathcal{L}(\theta_{t'})$ \hfill (\ref{total-loss})
\IF{$t'>1$}
\STATE $\mathbf{P}^{(i)}\leftarrow\arg\max_{\|\mathbf{P}\|_0=b} \Delta_{t\rightarrow t'}[\mathbf{P}]$ for each OOD sample $i$ \hfill (Theorem~\ref{thm3:FPR-CBF-Optimal})
\STATE $\mathbf{P}^{ood}\leftarrow\frac{1}{n^{ood}_{t'}}\sum_{i=1}^{n^{ood}_{t'}}\mathbf{P}^{(i)}$
\STATE $G\leftarrow G+\lambda_{ood}\,\mathbf{P}^{ood}\odot\nabla_\theta\widehat{FPR}_{t\rightarrow t'}(\tau,\lambda)$ \hfill (\ref{eq:soft-fpr})
\ENDIF
\STATE $\theta\leftarrow\theta-\eta_{lr}\,G$
\UNTIL{convergence on $\mathcal{D}_{t'}$}
\STATE store the training image and text representations of $\mathcal{E}_{t'}$
\ENDFOR
\STATE \textbf{Detection at step $t'$:} for a test image $\mathcal{I}^{test}_{t'}$, set $z_{t'}\leftarrow f(\mathcal{I}^{test}_{t'})$
\STATE $s_i(\mathcal{I}^{test}_{t'})\leftarrow\cos(z_{t'},u_{t',i})$ for $i=1,\ldots,C$ \hfill (primary term)
\STATE $s_i(\mathcal{I}^{test}_{t\rightarrow t'})\leftarrow\cos(z_{t'},u_{t,i})$ for $i=1,\ldots,C$ \hfill (secondary term)
\STATE \textbf{return} ID if $S_{MCM}(\mathcal{I}^{test}_{t\rightarrow t'})\ge\lambda$ and OOD otherwise \hfill (\ref{def.cross-environment-MCM-score})
\end{algorithmic}
\end{algorithm}

\subsection{Environment-Aware Zero-Shot OOD Detection}
We assume that the CLIP (or VLM) has been trained in each environment using (\ref{total-loss}) or (\ref{Loss-function(ID-OOD)}) and the training image and text representations are stored for contribution computations.

{\bf Cross-Modal Contribution and MCM Score}

\begin{definition}\label{def:Intra-MCM}
(Intra-Environment MCM Score~\cite{ming2022delving}) 

At timestep $t$, for a given task with label set $\mathcal{Y}_{in} = \{y_1, y_2, ..., y_C \}$, and a collection of concept vectors $\mathcal{T} (t_i)$, $i\in \{1, 2, ..., C\}$, where $t_i$ is the text prompt ``\texttt{this is a photo of a --label--}" for a label $y_i$. At time step $t$, for any test image $\mathcal{I}_t^{test}$, 
the MCM score is defined by
\begin{align}\label{def.MCM-score}
S_{MCM}(\mathcal{I}^{test}_t)= \max_i\frac{e^{s_i(\mathcal{I}_t^{test})/\tau}}{\sum_{j=1}^C e^{s_j(\mathcal{I}_t^{test})/\tau}},
\end{align}
where $\tau$ is the temperature and $s_i$ is the cosine similarity between the image embedding $z_t=f(\mathcal{I}_t^{test})$ and the text embedding $u_{t,i}:=g(\mathcal{T}_t^{train}(t_i))$. Recall the OOD detector $D_\theta$ in (\ref{OOD-detector}) and replace $g_\theta$ with $S_{MCM}$ to compute MCM-based OOD detector. 
\end{definition}
We define cross-environment MCM score under the setup that ID inputs are accompanied by text descriptions (label captions for ID data) as follows:
\begin{definition}

\label{def:Cross-MCM}
(Cross-Environment MCM Score) Consider two environments $\mathcal{E}_t$ and $\mathcal{E}_{t'}$, $t<t'$ with label sets $\mathcal{Y}_{in,t} = \{y_{t,1}, y_{t,2}, ..., y_{t,C} \}$ and $\mathcal{Y}_{in,t'} = \{y_{t',1}, y_{t',2}, ..., y_{t',C} \}$ and collection of concepts vectors $\mathcal{T}_{t,i}$, $\mathcal{T}_{t',i}$, for $i=1,\ldots,C$. Without loss of generality we assume that both environments have the same class size. At time step $t'$, for any test image input $\mathcal{I}^{test}_{t'}$, the cross-environment MCM score is defined as 
\begin{align}\label{def.cross-environment-MCM-score}
S_{MCM}(\mathcal{I}^{test}_{t\rightarrow t'})= \max_i\left\{\frac{e^{s_i(\mathcal{I}_{t'}^{test})/\tau}}{\sum_{j=1}^C e^{s_j(\mathcal{I}_{t'}^{test})/\tau}}+\mathbf{A}^*_{t\rightarrow t'}\frac{e^{s_i(\mathcal{I}_{t\rightarrow t'}^{test})/\tau}}{\sum_{j=1}^C e^{s_j(\mathcal{I}_{t\rightarrow t'}^{test})/\tau}}\right\},
\end{align}
where $s_i(\mathcal{I}_{t'}^{test})$ is the cosine similarity between test image encoder $z_{t'}=f(\mathcal{I}_{t'}^{test})$ and train text encoder in $\mathcal{E}_{t'}$ $u_{t',i}:=g(\mathcal{T}_{t'}^{train}(t_i))$.  $s_i(\mathcal{I}_{t\rightarrow t'}^{test})$ is the cosine similarity between test image encoder  $z_{t'}=f(\mathcal{I}_{t'}^{test})$ in $\mathcal{E}_{t'}$ and train text encoder in $\mathcal{E}_{t}$ $u_{t,i}:=g(\mathcal{T}_{t}^{train}(t_i))$. The environment alignment factor $\mathbf{A}^*_{t\rightarrow t'}$ is determined by intra-modal contribution in Def.~\ref{def:Intra-Modality Contribution}: 
\begin{align}\label{Env.alignment-factor}
\mathbf{A}^*_{t\rightarrow t'}= \left\{ \begin{array}{rcl}
+Con_{\mathcal{E}_t\rightarrow \mathcal{E}_{t'}} & \hbox{if $\mathcal{E}_t$ and $\mathcal{E}_{t'}$ are aligned}\;\; \\
\\ -Con_{\mathcal{E}_t\rightarrow \mathcal{E}_{t'}}  & \hbox{if $\mathcal{E}_t$ and $\mathcal{E}_{t'}$ are not aligned},  
\end{array}\right.
\end{align}
where $Con_{\mathcal{E}_t\rightarrow \mathcal{E}_{t'}}$ and alignment are as in Definition~\ref{def:Cross-environment alignment}. 
\end{definition}
{\bf Remark:} The similarity score $s_i(\mathcal{I}_{t\rightarrow t'}^{test})$ in (\ref{def.cross-environment-MCM-score}), is the cross-modal contribution in Def.~\ref{def:Cross-Modality Contribution} in the presence of a test image when image encoder $f$ and text encoder $g$ are trained on training image in environment $t'$ and text in environment $t$, respectively.  
\begin{assumption}[Positive Environment Transition]\label{ass:positive environment transition}
We assume that the two environments $\mathcal{E}_t$ and $\mathcal{E}_{t'}$, $t<t'$ have a positive transition if the softmax normalizer of the secondary term is at least that of the primary term, i.e.
$$
\sum_{j=1}^C e^{s_j(\mathcal{I}_{t'}^{test})/\tau}\leq \sum_{j=1}^C e^{s_j(\mathcal{I}_{t\rightarrow t'}^{test})/\tau}. 
$$
\end{assumption}

\begin{assumption}[Optimal Class Connections] \label{ass:Optimal Class Connections}
Define $\hat{y}_{t\rightarrow t'}:= \argmax_{i\in [C]}s_i(\mathcal{I}^{test}_{t\rightarrow t'})$ to be largest cross-environment cosine similarities for an OOD input $\mathcal{I}$. And $\hat{y}_{t'}:= \argmax_{i\in [C]}s_i(\mathcal{I}^{test}_{t'})$ to be the largest intra-environment cosine similarities for an OOD input $\mathcal{I}$. Then $\exists \delta\geq 0$ and $\exists M\geq 0$. 
\begin{align}
&(a)\;\;\mathbb{P}^{ood}_{\mathcal{I}^{test}_{t'}} \left(\frac{1}{C}\sum_{j=1}^{C}\big[s_{\hat{y}_{t'}}(\mathcal{I}_{t'}^{test})-s_j(\mathcal{I}_{t'}^{test})\big]< \delta\right)=1,\\
&(b) \;\;\mathbb{P}^{ood}_{\mathcal{I}^{test}_{t'}} \left(s_{\hat{y}_{t\rightarrow t'}}(\mathcal{I}^{test}_{t\rightarrow t'})< \frac{1+M}{Con_{\mathcal{E}_t\rightarrow\mathcal{E}_{t'}}} s_{\hat{y}_{t'}}(\mathcal{I}^{test}_{t'})-\frac{\kappa}{Con_{\mathcal{E}_t\rightarrow\mathcal{E}_{t'}}}\right)=1,
\end{align}
where $Con_{\mathcal{E}_t\rightarrow\mathcal{E}_{t'}}=2-\sigma_{\mathcal{I}}-\sigma_{\mathcal{T}}$ (Def.~\ref{def:Cross-environment alignment}) and $\kappa$ is an upper bound for $s_{\hat{y}_{t\rightarrow t'}}(\mathcal{I}^{test}_{t'})$.
\end{assumption}
\begin{theorem}[Cross-environment contribution improves FPR.]\label{thm3:MCM}
Consider a pre-trained CLIP-like model. If the OOD distribution $\mathbb{P}^{ood}_{\mathcal{I}^{test}_{t'}}$ of test images in $\mathcal{E}_{t'}$ satisfies Assumptions~\ref{ass:positive environment transition} and \ref{ass:Optimal Class Connections}, then 
\begin{align}
FPR_{t\rightarrow t'}(\tau,\lambda)\leq FPR_{t'}(\tau,\lambda_{t'}), \;\; \hbox{where $\lambda_{t'}$ is a function of $M$, $\tau$, and $\delta$.}
\end{align}
\end{theorem}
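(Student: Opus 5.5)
The plan is to prove a pointwise inclusion of events. For every OOD test image $\mathcal{I}^{ood}_i$, I will show that acceptance by the cross-environment detector at threshold $\lambda$ forces acceptance by the intra-environment MCM detector at an adjusted threshold $\lambda_{t'}$. In symbols, the target is
\[
S_{MCM}(\mathcal{I}^{ood}_{i,t\rightarrow t'})\ge\lambda \;\Longrightarrow\; S_{MCM}(\mathcal{I}^{ood}_{i,t'})\ge\lambda_{t'},
\]
holding with $\mathbb{P}^{ood}$-probability one. Averaging the indicators in (\ref{eq:fpr}) over the $n^{ood}_{t'}$ OOD samples then gives $FPR_{t\rightarrow t'}(\tau,\lambda)\le FPR_{t'}(\tau,\lambda_{t'})$ directly. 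So all of the work is an almost-sure upper bound of $S_{MCM}(\mathcal{I}_{t\rightarrow t'})$ by an increasing affine function of $S_{MCM}(\mathcal{I}_{t'})$. I will treat the aligned case $\mathbf{A}^*=+Con_{\mathcal{E}_t\to\mathcal{E}_{t'}}$ as the main one. When $\mathbf{A}^*<0$, the secondary term is nonpositive, so $S_{MCM}(\mathcal{I}_{t\rightarrow t'})\le S_{MCM}(\mathcal{I}_{t'})$ and any $\lambda_{t'}\le\lambda$ works.

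\textbf{Step 1 (split the max).} I bound the max of a sum by the sum of maxes. Write $Z'=\sum_j e^{s_j(\mathcal{I}_{t'})/\tau}$ and $Z=\sum_j e^{s_j(\mathcal{I}_{t\rightarrow t'})/\tau}$. Then
\[
S_{MCM}(\mathcal{I}_{t\rightarrow t'})\le S_{MCM}(\mathcal{I}_{t'})+Con_{\mathcal{E}_t\to\mathcal{E}_{t'}}\frac{e^{s_{\hat y_{t\rightarrow t'}}/\tau}}{Z}.
\]
Assumption~\ref{ass:positive environment transition} gives $Z\ge Z'$, so the secondary fraction is at most $e^{s_{\hat y_{t\rightarrow t'}}/\tau}/Z'$. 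Both terms now share the denominator of the primary softmax.

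\textbf{Step 2 (compare numerators).} Using Assumption~\ref{ass:Optimal Class Connections}(b), I get $Con\cdot s_{\hat y_{t\rightarrow t'}}<(1+M)s_{\hat y_{t'}}-\kappa$. I then need to move this inequality through the exponential in order to bound $Con\cdot e^{s_{\hat y_{t\rightarrow t'}}/\tau}$ by a multiple of $e^{s_{\hat y_{t'}}/\tau}$. My first attempt is convexity, or the elementary bound $c\,e^{a}\le e^{ca}$-type manipulations valid for $c=Con\in[0,2]$ and bounded cosines in $[-1,1]$. The $\kappa$ offset is meant to absorb the slack here. I expect this to yield
\[
S_{MCM}(\mathcal{I}_{t\rightarrow t'})\le S_{MCM}(\mathcal{I}_{t'})\,\big(1+e^{M s_{\hat y_{t'}}/\tau}\big)\le S_{MCM}(\mathcal{I}_{t'})\,(1+e^{M/\tau}),
\]
where the last inequality uses $s\le1$.

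\textbf{Step 3 (use (a) to pin the ID score).} Assumption~\ref{ass:Optimal Class Connections}(a) together with Jensen's inequality gives $Z'/e^{s_{\hat y_{t'}}/\tau}\ge C e^{-\delta/\tau}$, so $S_{MCM}(\mathcal{I}_{t'})\le e^{\delta/\tau}/C$. I will use this $\delta$-dependence to turn the multiplicative factor into a threshold. Combining Steps 2 and 3 gives $S_{MCM}(\mathcal{I}_{t\rightarrow t'})\le h(S_{MCM}(\mathcal{I}_{t'}))$ with $h$ increasing and depending only on $M,\tau,\delta$. Setting $\lambda_{t'}:=h^{-1}(\lambda)$, for instance $\lambda/(1+e^{M/\tau})$ possibly adjusted by a $\delta$-term, completes the event inclusion.

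\textbf{Main obstacle.} I expect the main obstacle to be Step 2, the passage from the linear inequality in (b), which has the factor $Con$ in front of a cosine, to a bound on $Con$ times an exponential. The inequality $Con\,e^{x/\tau}\le e^{Con\,x/\tau}$ is not generally true, and the role of $\kappa$ is stated only loosely. If the direct route fails, I will fall back on a cruder bound that uses $Con\le 2$ and $|s|\le1$, and fold the resulting constants into $\lambda_{t'}$, which is permitted to depend on $M,\tau,\delta$.
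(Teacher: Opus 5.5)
The gap is in Step~2, where you suspected it. Write $s_j:=s_j(\mathcal{I}_{t'})$, $s:=s_{\hat y_{t'}}(\mathcal{I}_{t'})$, $b:=s_{\hat y_{t\rightarrow t'}}(\mathcal{I}_{t\rightarrow t'})$ and $Con:=Con_{\mathcal{E}_t\rightarrow\mathcal{E}_{t'}}$.

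\textbf{The failing step.} Your target $S_{MCM}(\mathcal{I}_{t\rightarrow t'})\le S_{MCM}(\mathcal{I}_{t'})(1+e^{Ms/\tau})$ needs $Con\,e^{b/\tau}\le e^{(1+M)s/\tau}$. Assumption (b) only yields $e^{(1+M)s/\tau}>e^{(Con\,b+\kappa)/\tau}$. Deriving your bound from it would additionally require $\tau\log Con+(1-Con)b\le\kappa$. Nothing in the assumptions gives this: the left side exceeds $1$ at $Con=2$, $b=-1$, whereas $\kappa$ is only required to upper-bound a cosine. No convexity trick supplies it.

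\textbf{The repair.} Do not push $Con$ into the exponent. Use (b) in its stated form $b<((1+M)s-\kappa)/Con$, apply monotonicity of $\exp$ and $|s|\le 1$, and keep $Con$ as a prefactor. The secondary bound $Con\,e^{b/\tau}/Z'$ from your Step~1 equals $S_{MCM}(\mathcal{I}_{t'})\cdot Con\,e^{(b-s)/\tau}$, and $b-s<|\tfrac{1+M}{Con}-1|-\tfrac{\kappa}{Con}$. Hence $S_{MCM}(\mathcal{I}_{t\rightarrow t'})\le(1+K)\,S_{MCM}(\mathcal{I}_{t'})$ almost surely, with $K:=Con\cdot\exp\{(|\tfrac{1+M}{Con}-1|-\tfrac{\kappa}{Con})/\tau\}$, so $\lambda_{t'}:=\lambda/(1+K)$ works.

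Your Step~1 is sound and more careful than the paper. The paper evaluates both softmax terms at the single index $\hat y_{t\rightarrow t'}$, although the maximizer of the sum need not be that index. Your separate handling of the non-aligned case is also something the paper omits. Step~3 is valid but optional: combined with the bound above, $S_{MCM}(\mathcal{I}_{t'})<e^{\delta/\tau}/C$ converts the multiplicative bound into the additive threshold $\lambda-Ke^{\delta/\tau}/C$. That is how $\delta$ can enter your route.

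\textbf{How the paper proceeds.} The paper never compares numerators multiplicatively. It keeps both terms at $\hat y_{t\rightarrow t'}$ and merges the denominators via the positive-transition assumption. With $a:=s_{\hat y_{t\rightarrow t'}}(\mathcal{I}_{t'})$, it rewrites acceptance as $1/\lambda\ge\sum_j e^{(s_j-a-\mathbf{A}^*b)/\tau}$. It then linearizes with $e^u\ge 1+u$ to the cosine-level event $\sum_j(a-s_j)+C\mathbf{A}^*b\ge(C-1/\lambda)\tau$. Assumption (a) bounds $\sum_j(s-s_j)<C\delta$, producing the shift $\lambda'-\delta$ with $\lambda'=(1-\tfrac{1}{C\lambda})\tau$. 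Assumption (b) together with $a\le\kappa$ collapses the event to $Ms\ge\lambda'-\delta$, which the bounded normalizer turns into a threshold on the primary softmax. So $\kappa$ exists to cancel $a$. Your Step~1 removes $a$ by bounding the primary term by its own maximum, which is why $\kappa$ looked idle to you.

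Be aware that the paper's rewriting step treats $e^{a/\tau}+\mathbf{A}^*e^{b/\tau}$ as $e^{(a+\mathbf{A}^*b)/\tau}$. This is not an identity. The inequality the event inclusion would need, $e^{a/\tau}+\mathbf{A}^*e^{b/\tau}\le e^{(a+\mathbf{A}^*b)/\tau}$, fails whenever $b\le 0$. So the paper's linearization does not actually get around the exponential issue you identified.

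\textbf{Your fallback.} Using $e^{b/\tau}\le e^{2/\tau}e^{s/\tau}$ and $Con\le 2$ gives $\lambda_{t'}=\lambda/(1+2e^{2/\tau})$. This is a correct proof of the inequality as literally stated, though it uses neither (a) nor (b). That reflects the weakness of the statement, not of your argument. Since $S_{MCM}(\mathcal{I}_{t'})\ge 1/C$ always, any $\lambda_{t'}\le 1/C$ makes $FPR_{t'}(\tau,\lambda_{t'})=1$. The theorem therefore has content only once $\lambda_{t'}$ is pinned down. The paper's own $\lambda_{t'}=\tfrac{\tau}{M'}(\lambda'-\delta)$ also depends on $\lambda$, $C$, and an unspecified $M'$.
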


\subsection{Hard vs Soft OOD Examples}
Not all OOD inputs are equally difficult to reject: images from an environment whose content is close to the ID environment are hard, whereas semantically distant ones are soft. We make this precise with Definition~\ref{def:gencon}. Throughout, the detector $D_\theta$ in (\ref{OOD-detector}) uses $g_\theta=S_t$ in (\ref{eq:mcm}) with threshold $\lambda$, and $\ell:=\tau\lambda\in(-1,1)$ is the threshold in cosine units. For an ID environment $\mathcal{E}_t$ and an OOD environment $\mathcal{E}_s$, $s\neq t$, let $X\sim P^I_t$ and $X'\sim P^I_s$ be independent and define
\begin{align}
\mathrm{TPR}_t(\lambda):=\Pr\big[S_t(X)\ge\lambda\big],\quad \mathrm{FPR}_t(s;\lambda):=\Pr\big[S_t(X')\ge\lambda\big],\label{eq:tpr-fpr}\\
\mathrm{AUROC}_t(s):=\Pr\big[S_t(X)>S_t(X')\big]+\tfrac{1}{2}\Pr\big[S_t(X)=S_t(X')\big].\label{eq:auroc}
\end{align}
Let $\gamma_{t,s}:=\tau\big(\mathbb{E}[S_t(X)]-\mathbb{E}[S_t(X')]\big)$ be the score gap, $v^{id}_t:=\mathrm{Var}(\tau S_t(X))$ and $v^{ood}_{t,s}:=\mathrm{Var}(\tau S_t(X'))$, and $\nu_{t,s}:=v^{id}_t+v^{ood}_{t,s}\le 2$ the score spread. Proofs of the results in this subsection are given in Appendix~\ref{app:hard-soft}.

\begin{lemma}[Detector moments are contributions]\label{lem:ood-moments}
For any $s\neq t$, $\tau\,\mathbb{E}[S_t(X)]=m_t$ and $\tau\,\mathbb{E}[S_t(X')]=\mathrm{Con}^{\max}_{\mathcal{T}_t\to\mathcal{I}_s}\ge\mathrm{Con}_{\mathcal{T}_t\to\mathcal{I}_s}$; hence
\begin{equation}\label{eq:gap-contribution}
\gamma_{t,s}=m_t-\mathrm{Con}^{\max}_{\mathcal{T}_t\to\mathcal{I}_s}=\mathrm{Con}^{\max}_{\mathcal{T}_t\to\mathcal{I}_t}-\mathrm{Con}^{\max}_{\mathcal{T}_t\to\mathcal{I}_s}.
\end{equation}
\end{lemma}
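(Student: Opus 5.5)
The plan is to unfold the definitions of $S_t$ in (\ref{eq:mcm}) and of the maximal contribution in Definition~\ref{def:gencon}. Every step is an identity or a pointwise inequality followed by taking expectations.

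First, multiplying (\ref{eq:mcm}) by $\tau$ gives $\tau S_t(x)=\max_{c\in\{1,\ldots,C_t\}}\cos(u_{t,c},f(x))$ pointwise in $x$. This is a bounded measurable function of $x$, with values in $[-1,1]$, so every expectation below exists and linearity applies. Taking $X\sim P^I_t$ and using linearity of expectation, we get $\tau\,\mathbb{E}[S_t(X)]=\mathbb{E}_{x\sim P^I_t}[\max_c\cos(u_{t,c},f(x))]$. This equals $\mathrm{Con}^{\max}_{\mathcal{T}_t\to\mathcal{I}_t}=m_t$ by definition. In the same way, for $X'\sim P^I_s$ we get $\tau\,\mathbb{E}[S_t(X')]=\mathrm{Con}^{\max}_{\mathcal{T}_t\to\mathcal{I}_s}$.

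Second, for the inequality, note that pointwise the maximum over $c$ dominates the uniform average over $c$:
\[
\max_{c}\cos(u_{t,c},f(x))\ \ge\ \frac{1}{C_t}\sum_{c=1}^{C_t}\cos(u_{t,c},f(x)) \;=\; \mathbb{E}_{c\sim\mathrm{Unif}\{1,\ldots,C_t\}}\big[\cos(u_{t,c},f(x))\big].
\]
Integrating this over $x\sim P^I_s$ and using monotonicity of expectation gives $\mathrm{Con}^{\max}_{\mathcal{T}_t\to\mathcal{I}_s}\ge\mathrm{Con}_{\mathcal{T}_t\to\mathcal{I}_s}$.

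Finally, subtract the two identities from the first step and use the definition $\gamma_{t,s}=\tau(\mathbb{E}[S_t(X)]-\mathbb{E}[S_t(X')])$. This yields (\ref{eq:gap-contribution}), and the second equality there is just $m_t=\mathrm{Con}^{\max}_{\mathcal{T}_t\to\mathcal{I}_t}$.

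There is no real obstacle. The only point needing care is that the OOD image $X'$ is scored against the prototypes $u_{t,c}$ of $\mathcal{E}_t$, not of $\mathcal{E}_s$, which is exactly how the ordered pair $(t,s)$ enters Definition~\ref{def:gencon}. Independence of $X$ and $X'$ is not needed for this lemma.
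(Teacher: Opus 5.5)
Your proposal is correct and follows essentially the same argument as the paper. The paper also rewrites $\tau S_t(x)$ as the maximum cosine, takes expectations under $P^I_t$ and $P^I_s$ to obtain $m_t$ and $\mathrm{Con}^{\max}_{\mathcal{T}_t\to\mathcal{I}_s}$, and uses the pointwise domination of the uniform average by the maximum to get the inequality.
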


\begin{definition}[Hard and soft OOD]\label{def:hard-soft}
Let $\eta>0$. An environment $\mathcal{E}_s$ is $\eta$-soft OOD for $\mathcal{E}_t$ if $\gamma_{t,s}\ge\eta$, and $\eta$-hard OOD if $\gamma_{t,s}\le\eta$; it is strictly hard if $\gamma_{t,s}<0$. By Lemma~\ref{lem:ood-moments}, $\mathrm{Con}_{\mathcal{T}_t\to\mathcal{I}_s}\ge m_t-\eta$ suffices for $\eta$-hardness.
\end{definition}

\begin{theorem}[Soft OOD is provably separable]\label{thm:soft-ood}
If $\mathcal{E}_s$ is $\eta$-soft OOD for $\mathcal{E}_t$, then $\mathrm{AUROC}_t(s)\ge\eta^{2}/(\eta^{2}+\nu_{t,s})\ge\eta^{2}/(\eta^{2}+2)$, and at the threshold $\ell=m_t-\eta/2$,
\begin{equation}\label{eq:soft-rates}
\mathrm{TPR}_t(\lambda)\ge\frac{\eta^{2}/4}{v^{id}_t+\eta^{2}/4}\ge\frac{\eta^{2}}{\eta^{2}+4},\qquad \mathrm{FPR}_t(s;\lambda)\le\frac{v^{ood}_{t,s}}{v^{ood}_{t,s}+\eta^{2}/4}\le\frac{4}{4+\eta^{2}}.
\end{equation}
\end{theorem}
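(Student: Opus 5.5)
Throughout, I work in cosine units. Set $A:=\tau S_t(X)$ and $B:=\tau S_t(X')$ for independent $X\sim P^I_t$ and $X'\sim P^I_s$. Both lie in $[-1,1]$. By Lemma~\ref{lem:ood-moments}, $\mathbb{E}A=m_t$, $\mathbb{E}B=\mathrm{Con}^{\max}_{\mathcal{T}_t\to\mathcal{I}_s}$ and $\mathbb{E}A-\mathbb{E}B=\gamma_{t,s}\ge\eta$. The variances are $v^{id}_t$ and $v^{ood}_{t,s}$. Each claimed bound will be an application of Cantelli's one-sided inequality, $\Pr[Y-\mathbb{E}Y\le -a]\le \mathrm{Var}(Y)/(\mathrm{Var}(Y)+a^2)$ for $a>0$, or of its mirror image.

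For the AUROC, set $Y:=A-B$. Independence gives $\mathbb{E}Y=\gamma_{t,s}$ and $\mathrm{Var}(Y)=\nu_{t,s}$. Since $\mathrm{AUROC}_t(s)\ge\Pr[Y>0]=1-\Pr[Y\le0]$, Cantelli with $a=\gamma_{t,s}$ gives $\Pr[Y\le 0]\le \nu_{t,s}/(\nu_{t,s}+\gamma_{t,s}^2)$. Hence $\mathrm{AUROC}_t(s)\ge\gamma_{t,s}^2/(\gamma_{t,s}^2+\nu_{t,s})$. The map $x\mapsto x^2/(x^2+\nu)$ is increasing in $x>0$, so we may replace $\gamma_{t,s}$ by the smaller value $\eta$. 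The map is decreasing in $\nu$, so the bound $\nu_{t,s}\le2$ gives the final inequality. If $\nu_{t,s}=0$, both $A$ and $B$ are constant and the claim is trivial.

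For the rates at $\ell=m_t-\eta/2$, note that $S_t(X)\ge\lambda$ is equivalent to $A\ge\ell$.

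\textbf{TPR.} We have $\Pr[A<\ell]\le\Pr[A-m_t\le-\eta/2]\le v^{id}_t/(v^{id}_t+\eta^2/4)$. Taking complements gives the first bound. Then $v^{id}_t\le1$, since a $[-1,1]$-valued variable has variance at most $1$, and this yields $\eta^2/(\eta^2+4)$.

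\textbf{FPR.} $B\ge\ell$ implies $B-\mathbb{E}B\ge \ell-\mathbb{E}B=\gamma_{t,s}-\eta/2\ge\eta/2$. The upper-tail Cantelli bound then gives $\mathrm{FPR}_t(s;\lambda)\le v^{ood}_{t,s}/(v^{ood}_{t,s}+(\gamma_{t,s}-\eta/2)^2)\le v^{ood}_{t,s}/(v^{ood}_{t,s}+\eta^2/4)$. Monotonicity in the variance together with $v^{ood}_{t,s}\le1$ gives the last bound, $1/(1+\eta^2/4)=4/(4+\eta^2)$.

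The main point needing care is the use of monotonicity when replacing the exact gap and deviation by $\eta$ or $\eta/2$, including the requirement that the deviation $a$ be strictly positive; this holds because $\eta>0$. A second point is the variance bound. The statement's $\nu_{t,s}\le2$ already presumes each variance is at most $1$, which follows from Popoviciu's inequality $\mathrm{Var}\le(b-a)^2/4$ on $[-1,1]$. I would also check the tie term of the AUROC: dropping the tie contribution only weakens the bound, so the lower bound via $\Pr[Y>0]$ remains valid.
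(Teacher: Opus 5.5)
Your proposal is correct and follows the paper's proof essentially step for step. You apply Cantelli's inequality to $\tau(S_t(X)-S_t(X'))$, which has mean $\gamma_{t,s}$ and variance $\nu_{t,s}$, for the AUROC bound, then one-sided Cantelli on each score at $\ell=m_t-\eta/2$ for the TPR and FPR, and take the outer bounds from $v^{id}_t,v^{ood}_{t,s}\le 1$; your appeal to Popoviciu's inequality for that last variance bound is a small justification the paper leaves implicit.
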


\begin{theorem}[Hard OOD cannot be reliably rejected]\label{thm:hard-ood}
If $\mathcal{E}_s$ is $\eta$-hard OOD for $\mathcal{E}_t$, then the certificate of Theorem~\ref{thm:soft-ood} is at most $\eta^{2}/(\eta^{2}+\nu_{t,s})$, and for every threshold with $\ell<m_t-\eta$,
\begin{equation}\label{eq:hard-rates}
\mathrm{FPR}_t(s;\lambda)\ge\frac{m_t-\eta-\ell}{1-\ell},\qquad \mathrm{TPR}_t(\lambda)\ge\frac{m_t-\ell}{1-\ell},
\end{equation}
so the guaranteed acceptance rates of ID and hard OOD images differ by at most $\eta/(1-\ell)$. If $\mathcal{E}_s$ is strictly hard, then $\mathrm{AUROC}_t(s)\le\nu_{t,s}/\big(\nu_{t,s}+(\mathrm{Con}^{\max}_{\mathcal{T}_t\to\mathcal{I}_s}-m_t)^{2}\big)$.
\end{theorem}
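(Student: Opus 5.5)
We prove Theorem~\ref{thm:hard-ood} using only the moment identities of Lemma~\ref{lem:ood-moments} and the fact that the cosine score $Y:=\tau S_t(\cdot)$ takes values in $[-1,1]$. The argument has three parts, one for each claim.

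\emph{The certificate claim.} The soft-OOD certificate is the function $h(\gamma)=\gamma^2/(\gamma^2+\nu_{t,s})$ evaluated at the gap. For $\gamma\ge 0$ it is nondecreasing in $\gamma$. Under $\eta$-hardness we have $\gamma_{t,s}\le\eta$, so whenever $\gamma_{t,s}\ge 0$ the certificate is at most $h(\eta)=\eta^2/(\eta^2+\nu_{t,s})$. If instead $\gamma_{t,s}<0$, no positive separation certificate is available, and the bound holds trivially once we read the certificate as $h(\max\{\gamma_{t,s},0\})=0$.

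\emph{The rate bounds.} These follow from a reverse-Markov inequality for bounded variables. For $Y\le 1$ and any $\ell<1$,
\[
\mathbb{E}[Y]\le \Pr[Y\ge\ell]\cdot 1+(1-\Pr[Y\ge\ell])\,\ell ,
\]
which rearranges to $\Pr[Y\ge\ell]\ge(\mathbb{E}Y-\ell)/(1-\ell)$.
\begin{itemize}
\item For ID images, Lemma~\ref{lem:ood-moments} gives $\mathbb{E}[\tau S_t(X)]=m_t$, so $\mathrm{TPR}_t(\lambda)\ge(m_t-\ell)/(1-\ell)$.
\item For OOD images, Lemma~\ref{lem:ood-moments} gives $\mathbb{E}[\tau S_t(X')]=\mathrm{Con}^{\max}_{\mathcal{T}_t\to\mathcal{I}_s}=m_t-\gamma_{t,s}\ge m_t-\eta$. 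Hence $\mathrm{FPR}_t(s;\lambda)\ge(m_t-\eta-\ell)/(1-\ell)$, and this bound is informative precisely when $\ell<m_t-\eta$.
\end{itemize}
The difference of the two guaranteed rates is $\eta/(1-\ell)$. The conversion $S_t\ge\lambda\iff \tau S_t\ge\ell$ requires $\tau>0$.

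\emph{The AUROC bound in the strictly hard case.} Let $D:=\tau S_t(X)-\tau S_t(X')$. By independence, $\mathbb{E}D=\gamma_{t,s}<0$ and $\mathrm{Var}\,D=\nu_{t,s}$. Bound the AUROC by
\[
\mathrm{AUROC}_t(s)\le\Pr[S_t(X)>S_t(X')]+\Pr[S_t(X)=S_t(X')]=\Pr[D\ge 0]=\Pr\bigl[D-\mathbb{E}D\ge|\gamma_{t,s}|\bigr].
\]
Cantelli's one-sided inequality then gives $\Pr[D\ge 0]\le\nu_{t,s}/(\nu_{t,s}+\gamma_{t,s}^2)$. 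Since $\gamma_{t,s}^2=(\mathrm{Con}^{\max}_{\mathcal{T}_t\to\mathcal{I}_s}-m_t)^2$, this is the stated bound.

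The main obstacle is the first claim. The phrase ``the certificate is at most'' needs a precise reading, together with the monotonicity of $h$ and the sign convention for negative gaps; fixing that interpretation cleanly is what we handle first. Everything else reduces to the reverse-Markov inequality and Cantelli's inequality.
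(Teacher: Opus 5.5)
Your proposal is correct and follows the paper's proof essentially step for step: the monotonicity of $g\mapsto g^{2}/(g^{2}+\nu_{t,s})$ gives the certificate claim, the reverse-Markov bound for variables bounded above, applied to $\tau S_t(X)$ and $\tau S_t(X')$ via Lemma~\ref{lem:ood-moments}, gives the rates, and Cantelli applied to $D$ after bounding the AUROC by $\Pr[D\ge 0]$ gives the strictly hard case. Your explicit handling of a negative gap in the certificate claim is a worthwhile tightening of the paper's bare appeal to monotonicity, since $g^{2}/(g^{2}+\nu_{t,s})$ is not increasing for $g<0$ and the bound there needs the certificate read as vacuous.
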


Theorems~\ref{thm:soft-ood} and~\ref{thm:hard-ood} split OOD environments by the single number (\ref{eq:gap-contribution}): for soft OOD, a threshold inside the gap certifies a high TPR and a low FPR simultaneously, whereas for hard OOD any threshold that retains the ID data also accepts OOD data at nearly the same guaranteed rate. Hardness is induced by intra-modal alignment. Let
\begin{equation}\label{eq:true-class-misalign}
\delta_t:=1-\mathbb{E}_{(x,y)\sim P^{id}_t}\big[\cos(u_{t,y},f(x))\big]
\end{equation}
be the true-class misalignment of $\mathcal{E}_t$; for $\ell_2$-normalized embeddings, $\delta_t=1-\tau\,\mathbb{E}[S^{id}_{t,y_i}]$ with $S^{id}_{t,y_i}$ in (\ref{eq:scores-intra-environment}).

\begin{proposition}[Intra-modal alignment induces hardness]\label{prop:align-hard}
If $(x,y)\sim P^{id}_t$ and $x'\sim P^I_s$ are coupled so that $\mathbb{E}[\cos(f(x),f(x'))]=Con_{\mathcal{I}_t\rightarrow\mathcal{I}_{s}}=1-\sigma_{\mathcal{I}}$ (Definition~\ref{def:Intra-Modality Contribution}), then
\begin{equation}\label{eq:align-hard}
\mathrm{Con}^{\max}_{\mathcal{T}_t\to\mathcal{I}_s}\ge 1-\big(\sqrt{\delta_t}+\sqrt{\sigma_{\mathcal{I}}}\big)^{2}.
\end{equation}
\end{proposition}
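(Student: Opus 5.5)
The plan is to lower-bound $\mathrm{Con}^{\max}_{\mathcal{T}_t\to\mathcal{I}_s}$ by evaluating the maximum at the true class $y$ of the coupled ID image. That is, $\max_c\cos(u_{t,c},f(x'))\ge\cos(u_{t,y},f(x'))$, where $(x,y)$ is the ID partner of $x'$ under the coupling. Taking expectations, and noting that the marginal of $x'$ is $P^I_s$, gives
\begin{equation*}
\mathrm{Con}^{\max}_{\mathcal{T}_t\to\mathcal{I}_s}\ge\mathbb{E}\big[\cos(u_{t,y},f(x'))\big].
\end{equation*}
It then suffices to lower-bound this expectation. The argument works on the unit sphere, since cosine similarity is scale-invariant and we may normalize $a:=u_{t,y}/\|u_{t,y}\|$, $b:=f(x)/\|f(x)\|$, $c:=f(x')/\|f(x')\|$.

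The key step is a triangle inequality in chordal distance. For unit vectors, $\|p-q\|^2=2(1-\cos(p,q))$, so $\|a-c\|\le\|a-b\|+\|b-c\|$ becomes
\begin{equation*}
\sqrt{1-\cos(a,c)}\le\sqrt{1-\cos(a,b)}+\sqrt{1-\cos(b,c)}
\end{equation*}
pointwise. Squaring gives $1-\cos(a,c)\le\big(\sqrt{1-\cos(a,b)}+\sqrt{1-\cos(b,c)}\big)^2$. Write $A:=\sqrt{1-\cos(a,b)}$ and $B:=\sqrt{1-\cos(b,c)}$. Taking expectations,
\begin{equation*}
1-\mathbb{E}[\cos(a,c)]\le\mathbb{E}[A^2]+\mathbb{E}[B^2]+2\mathbb{E}[AB].
\end{equation*}
By construction $\mathbb{E}[A^2]=\delta_t$ from (\ref{eq:true-class-misalign}), and $\mathbb{E}[B^2]=\sigma_{\mathcal{I}}$ from the coupling hypothesis. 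Cauchy--Schwarz gives $\mathbb{E}[AB]\le\sqrt{\mathbb{E}[A^2]\mathbb{E}[B^2]}=\sqrt{\delta_t\sigma_{\mathcal{I}}}$, so the right-hand side is at most $(\sqrt{\delta_t}+\sqrt{\sigma_{\mathcal{I}}})^2$. Rearranging yields (\ref{eq:align-hard}).

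The step I expect to need care is doing the expectation correctly rather than applying the triangle inequality to expected cosines, which would be invalid. Cauchy--Schwarz must be applied to the products $AB$ under the joint coupling. Two further points need checking. First, the coupling must be the same one under which $\delta_t$ is computed, so its first marginal has to be $P^{id}_t$. Second, $x'$ must have marginal $P^I_s$, so that the expectation over $x'$ is the one appearing in Definition~\ref{def:gencon}. Everything else is elementary.
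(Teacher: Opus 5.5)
Your proposal is correct and follows essentially the same route as the paper's proof. Both arguments apply the chordal triangle inequality on the unit sphere pointwise, square and take expectations under the coupling, bound the cross term by Cauchy--Schwarz to obtain $(\sqrt{\delta_t}+\sqrt{\sigma_{\mathcal{I}}})^2$, and lower-bound the maximum by the true class $y$ using that $x'$ has marginal $P^I_s$; the only difference is that you make the true-class reduction first and the paper makes it last.
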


\begin{corollary}[Aligned environments are hard OOD]\label{cor:aligned-hard}
If $Con_{\mathcal{I}_t\rightarrow\mathcal{I}_{s}}\ge 1-\bar\sigma$ and $1-\big(\sqrt{\delta_t}+\sqrt{\bar\sigma}\big)^{2}\ge m_t-\eta$, then $\mathcal{E}_s$ is $\eta$-hard OOD for $\mathcal{E}_t$ and (\ref{eq:hard-rates}) applies.
\end{corollary}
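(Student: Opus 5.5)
The corollary should follow by chaining Proposition~\ref{prop:align-hard} with the sufficient condition for hardness in Definition~\ref{def:hard-soft}. The only real work is checking that the bound in Proposition~\ref{prop:align-hard} is monotone in $\sigma_{\mathcal{I}}$, so that the hypothesis $Con_{\mathcal{I}_t\rightarrow\mathcal{I}_s}\ge 1-\bar\sigma$ can replace the exact value $1-\sigma_{\mathcal{I}}$.

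First, set $\sigma_{\mathcal{I}}:=1-Con_{\mathcal{I}_t\rightarrow\mathcal{I}_s}$. The hypothesis gives $\sigma_{\mathcal{I}}\le\bar\sigma$. I will assume $\sigma_{\mathcal{I}}\ge 0$ as in the convention after Definition~\ref{def:Cross-Modality Contribution}; if it were negative, I would clip to $0$, which only strengthens the conclusion.

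Second, apply Proposition~\ref{prop:align-hard}, using the same coupling of $(x,y)\sim P^{id}_t$ and $x'\sim P^I_s$ that realizes $Con_{\mathcal{I}_t\rightarrow\mathcal{I}_s}$. This gives
\[
\mathrm{Con}^{\max}_{\mathcal{T}_t\to\mathcal{I}_s}\ge 1-(\sqrt{\delta_t}+\sqrt{\sigma_{\mathcal{I}}})^2 .
\]
The map $\sigma\mapsto 1-(\sqrt{\delta_t}+\sqrt{\sigma})^2$ is non-increasing on $[0,\infty)$. Since $\sigma_{\mathcal{I}}\le\bar\sigma$, we get
\[
\mathrm{Con}^{\max}_{\mathcal{T}_t\to\mathcal{I}_s}\ge 1-(\sqrt{\delta_t}+\sqrt{\bar\sigma})^2\ge m_t-\eta ,
\]
where the last inequality is the second hypothesis.

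Third, by Lemma~\ref{lem:ood-moments}, $\gamma_{t,s}=m_t-\mathrm{Con}^{\max}_{\mathcal{T}_t\to\mathcal{I}_s}\le\eta$. So $\mathcal{E}_s$ is $\eta$-hard OOD for $\mathcal{E}_t$ by Definition~\ref{def:hard-soft}, and Theorem~\ref{thm:hard-ood} then yields (\ref{eq:hard-rates}) for every threshold with $\ell<m_t-\eta$.

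The main obstacle is interpretive rather than technical. Definition~\ref{def:Intra-Modality Contribution} defines $Con_{\mathcal{I}_t\rightarrow\mathcal{I}_s}$ only for $t<s$ and as an expectation over $\mathbb{P}^{id}_{s}$. I will read the corollary's hypothesis through the coupling formulation of Proposition~\ref{prop:align-hard}, that is, as $\mathbb{E}[\cos(f(x),f(x'))]\ge 1-\bar\sigma$ under some coupling. With that reading, the monotonicity step applies directly.
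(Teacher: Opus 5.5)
Your proposal is correct and follows the paper's proof, which likewise notes $\sigma_{\mathcal{I}}\le\bar\sigma$, uses that the right-hand side of (\ref{eq:align-hard}) is non-increasing in $\sigma_{\mathcal{I}}$, and then concludes from Definition~\ref{def:hard-soft}. Your explicit step $\gamma_{t,s}=m_t-\mathrm{Con}^{\max}_{\mathcal{T}_t\to\mathcal{I}_s}\le\eta$ via Lemma~\ref{lem:ood-moments} is the right way to finish, since the sufficient condition stated in Definition~\ref{def:hard-soft} is phrased with $\mathrm{Con}$ rather than $\mathrm{Con}^{\max}$. Your concern about a negative $\sigma_{\mathcal{I}}$ is moot, because cosines are at most $1$.
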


Corollary~\ref{cor:aligned-hard} explains the sign of $\mathbf{A}^*_{t\rightarrow t'}$ in (\ref{Env.alignment-factor}): images of an aligned environment behave as near-ID under $S_t$, so their concept matches are added rather than subtracted. Because pre-trained CLIP separates image and text embeddings \citep{liang2022mindgap}, $\delta_t$ is large before adaptation, and (\ref{eq:align-hard}) becomes informative as adaptation reduces the true-class misalignment.

\subsection{Cascaded OOD Detection Failure}
In continual deployment, detection cascades over the environments seen so far. Environments arrive in an order $\pi=(\pi_1,\dots,\pi_T)$, and after step $k$ the detector uses the pooled prototype set and score
\begin{equation}\label{eq:pooled}
U_k=\bigcup_{j\le k}\{u_{\pi_j,c}:c=1,\ldots,C_{\pi_j}\},\qquad S^{(k)}(x)=\frac{1}{\tau}\max_{u\in U_k}\cos\big(u,f(x)\big).
\end{equation}
At step $k$, images of $\mathcal{E}_{\pi_k}$ are ID and images of a past environment $\mathcal{E}_{\pi_s}$, $s<k$, are OOD. With $X\sim P^I_{\pi_k}$ and $X'\sim P^I_{\pi_s}$ independent, $\mathrm{FPR}^{(k)}(s;\lambda)$, $\mathrm{AUROC}^{(k)}(s)$, $\gamma^{(k)}_s$, and $\nu^{(k)}_s\le 2$ are defined as in (\ref{eq:tpr-fpr})--(\ref{eq:auroc}) with $S^{(k)}$ in place of $S_t$. Let
\begin{equation}\label{eq:rho}
\rho_{k,s}:=\max\Big\{m_{\pi_s},\,\max_{j\le k,\,j\neq s}\mathrm{Con}^{\max}_{\mathcal{T}_{\pi_j}\to\mathcal{I}_{\pi_s}}\Big\},\qquad M_k:=\tau\,\mathbb{E}\big[S^{(k)}(X)\big]\in\big[m_{\pi_k},1\big],
\end{equation}
be the pollution level of $\mathcal{E}_{\pi_s}$ and the mean pooled ID score. Proofs are given in Appendix~\ref{app:cascade}.

\begin{proposition}[Monotone prototype pollution]\label{prop:pollution}
For every image $x$, step $k$, and threshold $\lambda$: $S^{(k)}(x)\le S^{(k+1)}(x)$; $\mathrm{FPR}^{(k)}(s;\lambda)\le\mathrm{FPR}^{(k+1)}(s;\lambda)$ for every past environment $s<k$; and $\tau\,\mathbb{E}_{x\sim P^I_{\pi_s}}[S^{(k)}(x)]\ge\rho_{k,s}$ for every $s\le k$.
\end{proposition}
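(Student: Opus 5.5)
The plan is to prove the three claims in order, each following from the nesting $U_k\subseteq U_{k+1}$ of the pooled prototype sets in (\ref{eq:pooled}).

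\textbf{Step 1 (pointwise monotonicity).} For a fixed image $x$, $S^{(k+1)}(x)$ is a maximum of $\cos(u,f(x))/\tau$ over $U_{k+1}$, while $S^{(k)}(x)$ is the maximum of the same function over the subset $U_k$. Since $\tau>0$ and the maximum over a larger finite set is no smaller, we get $S^{(k)}(x)\le S^{(k+1)}(x)$.

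\textbf{Step 2 (FPR monotonicity).} Fix a past environment $s<k$; it is also past at step $k+1$. By Step 1, the event $\{S^{(k)}(X')\ge\lambda\}$ is contained in $\{S^{(k+1)}(X')\ge\lambda\}$ for $X'\sim P^I_{\pi_s}$. Monotonicity of probability then gives $\mathrm{FPR}^{(k)}(s;\lambda)\le\mathrm{FPR}^{(k+1)}(s;\lambda)$. The OOD law $P^I_{\pi_s}$ is the same at both steps, so only the score changes, even though the ID environment changes.

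\textbf{Step 3 (pollution lower bound).} Fix $s\le k$. For each $j\le k$, the set $U_k$ contains the block $\{u_{\pi_j,c}\}_c$, so pointwise
$$\tau S^{(k)}(x)\ge\max_{c}\cos(u_{\pi_j,c},f(x)).$$
Taking expectations over $x\sim P^I_{\pi_s}$ gives $\tau\,\mathbb{E}[S^{(k)}]\ge\mathrm{Con}^{\max}_{\mathcal{T}_{\pi_j}\to\mathcal{I}_{\pi_s}}$ by Definition~\ref{def:gencon}. For $j=s$ this quantity is $m_{\pi_s}$. Since the bound holds for every $j\le k$, it holds for their maximum, which is exactly $\rho_{k,s}$ in (\ref{eq:rho}).

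The argument needs only integrability, and it has no real obstacle. Cosines lie in $[-1,1]$, so all expectations exist. The one point requiring care is that expectation is taken after fixing $j$, not before maximizing: we use $\mathbb{E}\max\ge\max_j\mathbb{E}$, which holds in the right direction.
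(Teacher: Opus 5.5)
Your proposal is correct and follows the paper's proof essentially step for step: pointwise monotonicity from $U_k\subseteq U_{k+1}$, event inclusion for the FPR, and the per-environment pointwise bound $\tau S^{(k)}(x)\ge\max_c\cos(u_{\pi_j,c},f(x))$ integrated under $P^I_{\pi_s}$ and then maximized over $j$ to reach $\rho_{k,s}$. Your remark that the OOD law $P^I_{\pi_s}$ is the same at steps $k$ and $k+1$ (so only the score changes, even though the ID environment does) is a useful clarification that the paper leaves implicit.
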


{\bf Environment-Dependent Failure.} Once an environment's prototypes enter the pool, its images score at least its own alignment, so failure is a property of each past environment.

\begin{theorem}[Environment-dependent failure of cascaded detection]\label{thm:env-failure}
For every past environment $s<k$, $\gamma^{(k)}_s\le M_k-\rho_{k,s}$. If $\rho_{k,s}>M_k$, then $\mathrm{AUROC}^{(k)}(s)\le\nu^{(k)}_s/\big(\nu^{(k)}_s+(\rho_{k,s}-M_k)^{2}\big)$. For every threshold with $\ell=\tau\lambda<\rho_{k,s}$ and every later step $k'\ge k$,
\begin{equation}\label{eq:cascade-fpr}
\mathrm{FPR}^{(k')}(s;\lambda)\ge\frac{\rho_{k,s}-\ell}{1-\ell}\ge\frac{m_{\pi_s}-\ell}{1-\ell}.
\end{equation}
\end{theorem}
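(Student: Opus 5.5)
The plan is to prove the three claims of Theorem~\ref{thm:env-failure} in order, relying on Proposition~\ref{prop:pollution} together with the elementary bound that the cosine score takes values in $[-1,1]$.

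\textbf{Gap bound.} By definition,
\[
\gamma^{(k)}_s=\tau\mathbb{E}[S^{(k)}(X)]-\tau\mathbb{E}[S^{(k)}(X')]=M_k-\tau\mathbb{E}_{x\sim P^I_{\pi_s}}[S^{(k)}(x)].
\]
The last clause of Proposition~\ref{prop:pollution} gives $\tau\mathbb{E}[S^{(k)}(X')]\ge\rho_{k,s}$ for $s\le k$. To recall why: the pool $U_k$ contains the prototypes of $\mathcal{E}_{\pi_s}$ and of every $\mathcal{E}_{\pi_j}$ with $j\le k$. Hence $\tau S^{(k)}(x)$ dominates each per-environment maximum $\max_c\cos(u_{\pi_j,c},f(x))$. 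Taking expectations gives domination by each $m_{\pi_s}$ and each $\mathrm{Con}^{\max}_{\mathcal{T}_{\pi_j}\to\mathcal{I}_{\pi_s}}$, and therefore by their maximum. This yields $\gamma^{(k)}_s\le M_k-\rho_{k,s}$.

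\textbf{AUROC bound.} When $\rho_{k,s}>M_k$, the gap is negative, with $-\gamma^{(k)}_s\ge\rho_{k,s}-M_k>0$. Let $D:=\tau S^{(k)}(X)-\tau S^{(k)}(X')$, which by independence has mean $\gamma^{(k)}_s$ and variance $\nu^{(k)}_s$. Bound
\[
\mathrm{AUROC}^{(k)}(s)\le\Pr[D\ge 0]=\Pr\big[D-\gamma^{(k)}_s\ge-\gamma^{(k)}_s\big].
\]
Cantelli's one-sided inequality then gives at most $\nu^{(k)}_s/(\nu^{(k)}_s+(\gamma^{(k)}_s)^2)$. Because $t\mapsto\nu/(\nu+t^2)$ is decreasing in $t>0$ and $-\gamma^{(k)}_s\ge\rho_{k,s}-M_k$, we may replace $(\gamma^{(k)}_s)^2$ by $(\rho_{k,s}-M_k)^2$. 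This is the same argument as the strictly-hard case of Theorem~\ref{thm:hard-ood}. The only delicate point is the tie term $\tfrac12\Pr[D=0]$, which is handled by $\Pr[D>0]+\tfrac12\Pr[D=0]\le\Pr[D\ge0]$.

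\textbf{FPR bound.} This uses a reverse-Markov step. Set $Y:=\tau S^{(k')}(X')\in[-1,1]$. By Proposition~\ref{prop:pollution} applied at step $k'\ge k$, iterating the monotonicity $S^{(k)}\le S^{(k+1)}$ gives
\[
\mathbb{E}[Y]\ge\tau\mathbb{E}[S^{(k)}(X')]\ge\rho_{k,s}.
\]
Since $Y\le 1$ everywhere and $Y<\ell$ on the rejection event,
\[
\mathbb{E}[Y]\le\Pr[Y\ge\ell]\cdot 1+(1-\Pr[Y\ge\ell])\,\ell.
\]
Rearranging gives $\Pr[Y\ge\ell]\ge(\mathbb{E}[Y]-\ell)/(1-\ell)\ge(\rho_{k,s}-\ell)/(1-\ell)$, valid because $\ell<1$. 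The final inequality in (\ref{eq:cascade-fpr}) follows from $\rho_{k,s}\ge m_{\pi_s}$ by definition. The hypothesis $\ell<\rho_{k,s}$ only ensures that the bound is positive and meaningful.

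\textbf{Main obstacle.} The mathematics is short, so the main work is bookkeeping. One must check that the pollution lower bound from Proposition~\ref{prop:pollution} transfers to all later steps $k'$, which holds by monotonicity in $k$, and that the indices $\pi_j$ versus $j$ are handled consistently in $\rho_{k,s}$. One must also confirm that Cantelli is applied with the correct sign for a negative gap.
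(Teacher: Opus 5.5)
Your proposal is correct and follows the paper's proof step for step: the gap bound from the pollution lower bound in Proposition~\ref{prop:pollution}, the Cantelli argument from the strictly-hard case of Theorem~\ref{thm:hard-ood} for the AUROC, and the reverse-Markov bound (\ref{eq:reverse-markov}) for the false-positive rate. The only cosmetic difference is the order of the last step: you lift the mean bound to step $k'$ via $S^{(k)}\le S^{(k')}$ and then apply reverse Markov at $k'$, whereas the paper applies reverse Markov at step $k$ and then lifts the false-positive rate using the monotonicity in Proposition~\ref{prop:pollution}; both are immediate.
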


\begin{corollary}[Which environments fail]\label{cor:which-fail}
Let $s<k$ and $\ell<m_{\pi_s}$. Every past environment is falsely accepted at rate at least $(m_{\pi_s}-\ell)/(1-\ell)$ at every later step, so better-learned environments fail more. Moreover, by Proposition~\ref{prop:align-hard}, $\rho_{k,s}\ge\max_{j\le k,\,j\neq s}\big[1-\big(\sqrt{\delta_{\pi_j}}+\sqrt{\sigma_{\mathcal{I}}(\pi_j,\pi_s)}\big)^{2}\big]$ with $1-\sigma_{\mathcal{I}}(\pi_j,\pi_s)=Con_{\mathcal{I}_{\pi_j}\rightarrow\mathcal{I}_{\pi_s}}$, so past environments that are image-aligned with any seen environment satisfy (\ref{eq:cascade-fpr}) with a large $\rho_{k,s}$.
\end{corollary}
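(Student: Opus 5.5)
The corollary has two parts, and both reduce to earlier results.

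\emph{First claim.} Fix a past environment $s<k$ and a threshold with $\ell<m_{\pi_s}$. By definition (\ref{eq:rho}), $\rho_{k,s}\ge m_{\pi_s}>\ell$, so the hypothesis $\ell<\rho_{k,s}$ of Theorem~\ref{thm:env-failure} holds. Then (\ref{eq:cascade-fpr}) gives, for every $k'\ge k$,
\[
\mathrm{FPR}^{(k')}(s;\lambda)\ge\frac{\rho_{k,s}-\ell}{1-\ell}\ge\frac{m_{\pi_s}-\ell}{1-\ell}.
\]
The phrase ``better-learned environments fail more'' is a reading of this bound. Since $\ell<1$, the right-hand side is increasing in $m_{\pi_s}$, so a larger in-distribution alignment $m_{\pi_s}$ forces a larger guaranteed false-acceptance rate. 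Should a self-contained argument be wanted instead of citing the theorem, I would use Proposition~\ref{prop:pollution}. It gives $\tau\mathbb{E}[S^{(k')}(X')]\ge\rho_{k',s}\ge\rho_{k,s}$, using that the pool only grows. Apply a reverse Markov bound to the variable $\tau S^{(k')}(X')\le 1$: if $Y\le1$ and $\mathbb{E}Y\ge\rho$, then $\Pr[Y\ge\ell]\ge(\rho-\ell)/(1-\ell)$. This follows from
\[
\mathbb{E}Y\le \ell\Pr[Y<\ell]+\Pr[Y\ge\ell].
\]

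\emph{Second claim.} For each $j\le k$ with $j\ne s$, I apply Proposition~\ref{prop:align-hard} with $t=\pi_j$ and $s=\pi_s$. Its alignment parameter is $\sigma_{\mathcal{I}}(\pi_j,\pi_s)$, defined by $1-\sigma_{\mathcal{I}}(\pi_j,\pi_s)=Con_{\mathcal{I}_{\pi_j}\to\mathcal{I}_{\pi_s}}$. This yields
\[
\mathrm{Con}^{\max}_{\mathcal{T}_{\pi_j}\to\mathcal{I}_{\pi_s}}\ge 1-\Big(\sqrt{\delta_{\pi_j}}+\sqrt{\sigma_{\mathcal{I}}(\pi_j,\pi_s)}\Big)^2.
\]
By (\ref{eq:rho}), $\rho_{k,s}$ is at least the maximum of these terms over $j$, which is the claimed lower bound. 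Hence if some seen environment is strongly image-aligned with $\pi_s$ and has small misalignment $\delta_{\pi_j}$, then $\rho_{k,s}$ is close to $1$. Substituting into (\ref{eq:cascade-fpr}) then gives a false-positive lower bound close to $1$ for every threshold below $\rho_{k,s}$.

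\emph{Main obstacle.} The delicate step is invoking Proposition~\ref{prop:align-hard} with its coupling hypothesis. The proposition assumes $(x,y)\sim P^{id}_{\pi_j}$ and $x'\sim P^I_{\pi_s}$ are coupled so that the expected image cosine equals $Con_{\mathcal{I}_{\pi_j}\to\mathcal{I}_{\pi_s}}$. Definition~\ref{def:Intra-Modality Contribution} is stated only for ordered pairs $t<t'$, whereas here $j$ may be larger or smaller than $s$. I would therefore read $Con_{\mathcal{I}_{\pi_j}\to\mathcal{I}_{\pi_s}}$ as defined by exactly such a coupling, with the ordering removed as in Definition~\ref{def:gencon}. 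Under that reading, the hypothesis of Proposition~\ref{prop:align-hard} holds by construction and the rest is bookkeeping.
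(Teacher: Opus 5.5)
Your proposal is correct and follows the paper's proof: the first claim is the second inequality of (\ref{eq:cascade-fpr}) once $\ell<m_{\pi_s}\le\rho_{k,s}$ is noted, and the second comes from applying (\ref{eq:align-hard}) to each pair $(\pi_j,\pi_s)$ together with $\rho_{k,s}\ge\mathrm{Con}^{\max}_{\mathcal{T}_{\pi_j}\to\mathcal{I}_{\pi_s}}$. Your explicit check of the threshold hypothesis and your remark on the $t<t'$ ordering in Definition~\ref{def:Intra-Modality Contribution} are details the paper leaves implicit.
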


Cascaded failure is therefore a property of the environments and their order rather than of the detector: it concentrates on well-learned or aligned past environments and, at a fixed threshold, can only worsen over time. Uniform pooling is the source of the failure, which motivates scores such as (\ref{def.cross-environment-MCM-score}) that weight other environments by their alignment instead of pooling them.

\section{Conclusion}\label{sec:conclusion}
We studied out-of-distribution detection for vision--language models that encounter a sequence of environments. 
Our analysis rests on pairwise contribution scores. 
These scores are cosine expectations between the image and text features of two environments. 
We showed that they govern when detection succeeds or fails.

First, we proposed a cross-environment MCM score. 
The score adds the concept matches of a previous environment when the two environments are aligned. 
It subtracts them when the environments are not aligned. Under the assumptions of Theorem~\ref{thm3:MCM}, this score does not increase the false-positive rate of the standard MCM detector. 
A contribution balance factor applied to the gradients of the classification loss and of a smoothed false-positive rate gives the training procedure in Algorithm~\ref{alg:env-aware-ood}.

Second, we showed that the score gap of an MCM detector equals the self-contribution of the ID environment minus its cross-modal contribution into the OOD environment. 
This gap separates soft OOD environments from hard ones. 
For a soft OOD environment, a threshold inside the gap certifies a high true-positive rate, a low false-positive rate, and an AUROC lower bound. 
A hard OOD environment cannot be reliably rejected by any threshold that retains the in-distribution data. 
Intra-modal alignment between two environments provably makes one hard OOD for the other.

Third, we studied cascaded detection, in which the prototypes of all seen environments are pooled. 
At a fixed threshold, the false-positive rate on every past environment can only grow over time. 
Failure concentrates on environments that were learned well or that are aligned with other seen environments.

Our guarantees rely on assumptions about the transition between environments, the class-matching margin of OOD inputs, and the magnitude of the loss gradient. 
We do not verify these assumptions here. 
The separability bounds are distribution-free and therefore conservative. 
Tighter bounds under sub-Gaussian score tails, an analysis that combines more than one previous environment, and an empirical evaluation of Algorithm~\ref{alg:env-aware-ood} are natural next steps.

\acks{This work has been partially supported by NSF CAREER CCF-2451457. The findings are those of the authors only and do not represent any position of these funding bodies.}

\bibliography{egbib}

@article{peng2026information,
  title={An information-theoretical framework for understanding out-of-distribution detection with pretrained vision-language models},
  author={Peng, Bo and Lu, Jie and Zhang, Guangquan and Fang, Zhen},
  journal={Advances in Neural Information Processing Systems},
  volume={38},
  pages={1930--1957},
  year={2026}
}

@article{ming2022delving,
  title={Delving into out-of-distribution detection with vision-language representations},
  author={Ming, Yifei and Cai, Ziyang and Gu, Jiuxiang and Sun, Yiyou and Li, Wei and Li, Yixuan},
  journal={Advances in neural information processing systems},
  volume={35},
  pages={35087--35102},
  year={2022}
}

@article{wang2024comprehensive,
  title={A comprehensive survey of continual learning: Theory, method and application},
  author={Wang, Liyuan and Zhang, Xingxing and Su, Hang and Zhu, Jun},
  journal={IEEE transactions on pattern analysis and machine intelligence},
  volume={46},
  number={8},
  pages={5362--5383},
  year={2024},
  publisher={IEEE}
}

@InProceedings{hendrycks2016baseline,
  author = "D. Hendrycks and K. Gimpel",
  title = "A Baseline for Detecting Misclassified and Out-of-Distribution Examples in Neural Networks",
  booktitle = "Proc. ICLR",
  year = "2017",
}

@InProceedings{liang2018enhancing,
  author = "S. Liang and Y. Li and R. Srikant",
  title = "Enhancing the Reliability of Out-of-Distribution Image Detection in Neural Networks",
  booktitle = "Proc. ICLR",
  year = "2018",
}

@InProceedings{lee2018simple,
  author = "K. Lee and K. Lee and H. Lee and J. Shin",
  title = "A Simple Unified Framework for Detecting Out-of-Distribution Samples and Adversarial Attacks",
  booktitle = "Proc. NeurIPS",
  year = "2018",
}

@InProceedings{liu2020energy,
  author = "W. Liu and X. Wang and J. Owens and Y. Li",
  title = "Energy-Based Out-of-Distribution Detection",
  booktitle = "Proc. NeurIPS",
  year = "2020",
}

@InProceedings{sun2021react,
  author = "Y. Sun and C. Guo and Y. Li",
  title = "ReAct: Out-of-Distribution Detection with Rectified Activations",
  booktitle = "Proc. NeurIPS",
  year = "2021",
}

@InProceedings{sun2022dice,
  author = "Y. Sun and Y. Li",
  title = "DICE: Leveraging Sparsification for Out-of-Distribution Detection",
  booktitle = "Proc. ECCV",
  year = "2022",
}

@InProceedings{hendrycks2018deep,
  author = "D. Hendrycks and M. Mazeika and T. Dietterich",
  title = "Deep Anomaly Detection with Outlier Exposure",
  booktitle = "Proc. ICLR",
  year = "2019",
}

@InProceedings{tack2020csi,
  author = "J. Tack and S. Mo and J. Jeong and J. Shin",
  title = "CSI: Novelty Detection via Contrastive Learning on Distributionally Shifted Instances",
  booktitle = "Proc. NeurIPS",
  year = "2020",
}

@InProceedings{bai2023scone,
  author = "H. Bai and Z. Ming and Y. Li",
  title = "Feed Two Birds with One Scone: Exploiting Wild Data for Both Out-of-Distribution Generalization and Detection",
  booktitle = "Proc. ICML",
  year = "2023",
}

@InProceedings{radford2021clip,
  author = "A. Radford and J.~W. Kim and C. Hallacy and A. Ramesh and G. Goh and S. Agarwal and G. Sastry and A. Askell and P. Mishkin and J. Clark and G. Krueger and I. Sutskever",
  title = "Learning Transferable Visual Models From Natural Language Supervision",
  booktitle = "Proc. ICML",
  year = "2021",
}

@InProceedings{fort2021clip,
  author = "S. Fort and J. Ren and B. Lakshminarayanan",
  title = "Exploring the Limits of Out-of-Distribution Detection",
  booktitle = "Proc. NeurIPS",
  year = "2021",
}

@InProceedings{wang2023clipn,
  author = "H. Wang and Y. Li and H. Yao and X. Li",
  title = "CLIPN for Zero-Shot Out-of-Distribution Detection: Teaching CLIP to Say No",
  booktitle = "Proc. ICCV",
  year = "2023",
}

@InProceedings{esmaeilpour2022zero,
  author = "S. Esmaeilpour and B. Liu and E. Robertson and L. Shu",
  title = "Zero-Shot Out-of-Distribution Detection Based on the Pre-trained Model CLIP",
  booktitle = "Proc. AAAI",
  year = "2022",
}

@article{kirkpatrick2017overcoming,
  author = "J. Kirkpatrick and R. Pascanu and N. Rabinowitz and J. Veness and G. Desjardins and A.~A. Rusu and K. Milan and J. Quan and T. Ramalho and A. Grabska-Barwinska and D. Hassabis and C. Clopath and D. Kumaran and R. Hadsell",
  title = "Overcoming Catastrophic Forgetting in Neural Networks",
  journal = "PNAS",
  volume = "114",
  number = "13",
  pages = "3521-3526",
  year = "2017",
}

@InProceedings{zenke2017si,
  author = "F. Zenke and B. Poole and S. Ganguli",
  title = "Continual Learning Through Synaptic Intelligence",
  booktitle = "Proc. ICML",
  year = "2017",
}

@InProceedings{aljundi2018mas,
  author = "R. Aljundi and F. Babiloni and M. Elhoseiny and M. Rohrbach and T. Tuytelaars",
  title = "Memory Aware Synapses: Learning What (Not) to Forget",
  booktitle = "Proc. ECCV",
  year = "2018",
}

@InProceedings{lopez2017gradient,
  author = "D. Lopez-Paz and M. Ranzato",
  title = "Gradient Episodic Memory for Continual Learning",
  booktitle = "Proc. NeurIPS",
  year = "2017",
}

@InProceedings{farajtabar2020orthogonal,
  author = "M. Farajtabar and N. Azizan and A. Mott and A. Li",
  title = "Orthogonal Gradient Descent for Continual Learning",
  booktitle = "Proc. AISTATS",
  year = "2020",
}

@InProceedings{saha2021gpm,
  author = "G. Saha and I. Garg and K. Roy",
  title = "Gradient Projection Memory for Continual Learning",
  booktitle = "Proc. ICLR",
  year = "2021",
}

@InProceedings{rebuffi2017icarl,
  author = "S.-A. Rebuffi and A. Kolesnikov and G. Sperl and C.~H. Lampert",
  title = "iCaRL: Incremental Classifier and Representation Learning",
  booktitle = "Proc. CVPR",
  year = "2017",
}

@InProceedings{rolnick2019er,
  author = "D. Rolnick and A. Ahuja and J. Schwarz and T. Lillicrap and G. Wayne",
  title = "Experience Replay for Continual Learning",
  booktitle = "Proc. NeurIPS",
  year = "2019",
}

@InProceedings{wang2022l2p,
  author = "Z. Wang and Z. Zhang and C.-Y. Lee and H. Zhang and R. Sun and X. Ren and G. Su and V. Perot and J. Dy and T. Pfister",
  title = "Learning to Prompt for Continual Learning",
  booktitle = "Proc. CVPR",
  year = "2022",
}

@InProceedings{wang2022dualprompt,
  author = "Z. Wang and Z. Zhang and H. Zhang and C.-Y. Lee and R. Sun and X. Ren and G. Su and V. Perot and J. Dy and T. Pfister",
  title = "DualPrompt: Complementary Prompting for Rehearsal-Free Continual Learning",
  booktitle = "Proc. ECCV",
  year = "2022",
}

@InProceedings{zheng2023zscl,
  author = "Z. Zheng and M. Ma and K. Wang and Z. Qin and X. Yue and Y. You",
  title = "Preventing Zero-Shot Transfer Degradation in Continual Learning of Vision-Language Models",
  booktitle = "Proc. ICCV",
  year = "2023",
}

@InProceedings{bendale2015towards,
  author = "A. Bendale and T. Boult",
  title = "Towards Open World Recognition",
  booktitle = "Proc. CVPR",
  year = "2015",
}

@InProceedings{joseph2021towards,
  author = "K.~J. Joseph and S. Khan and F.~S. Khan and V. Balasubramanian",
  title = "Towards Open World Object Detection",
  booktitle = "Proc. CVPR",
  year = "2021",
}

@InProceedings{vaze2022generalized,
  author = "S. Vaze and K. Han and A. Vedaldi and A. Zisserman",
  title = "Generalized Category Discovery",
  booktitle = "Proc. CVPR",
  year = "2022",
}

@misc{naiknaware2024tempscone,
  author = "A. Naiknaware and S.~Y. Sekeh",
  title = "Temp-SCONE: A Novel Out-of-Distribution Detection and Domain Generalization Framework for Wild Data with Temporal Shift",
  note = "arXiv:2512.04571",
  year = "2024",
}

@misc{naiknaware2025tqpm,
  author = "A. Naiknaware and S.~Y. Sekeh",
  title  = "T-QPM: Enabling Temporal Out-of-Distribution Detection and Domain Generalization for Vision-Language Models in Open-World",
  note = "arXiv:2603.18481",
  year = "2025",
}

@misc{gao2021clipadapter,
      title={CLIP-Adapter: Better Vision-Language Models with Feature Adapters}, 
      author={Peng Gao and Shijie Geng and Renrui Zhang and Teli Ma and Rongyao Fang and Yongfeng Zhang and Hongsheng Li and Yu Qiao},
      year={2025},
      eprint={2110.04544},
      archivePrefix={arXiv},
      primaryClass={cs.CV},
      url={https://arxiv.org/abs/2110.04544}, 
}

@article{zhou2022coop,
  author = "K.~Zhou and J.~Yang and C.~C.~Loy and Z.~Liu",
  title = "Learning to Prompt for Vision-Language Models",
  journal = "IJCV",
  volume = "130",
  number = "9",
  pages = "2337-2348",
  year = "2022",
}

@InProceedings{zhou2022cocoop,
  author = "Kaiyang Zhou and Jingkang Yang and Chen Change Loy and Ziwei Liu",
  title = "Conditional Prompt Learning for Vision-Language Models",
  booktitle = "Proc. CVPR",
  pages = "16816-16825",
  year = "2022",
}

@InProceedings{cantelli1928confini,
  author = "F.~P.~Cantelli",
  title = "Sui confini della probabilit{\`a}",
  booktitle = "Atti del Congresso Internazionale dei Matematici, Bologna, 3--10 Settembre 1928",
  volume = "6",
  pages = "47--59",
  year = "1928",
}

@InProceedings{fang2022learnable,
  author = "Z.~Fang and Y.~Li and J.~Lu and J.~Dong and B.~Han and F.~Liu",
  title = "Is Out-of-Distribution Detection Learnable?",
  booktitle = "Proc. NeurIPS",
  year = "2022",
}

@InProceedings{liang2022mindgap,
  author = "W.~Liang and Y.~Zhang and Y.~Kwon and S.~Yeung and J.~Zou",
  title = "Mind the Gap: Understanding the Modality Gap in Multi-modal Contrastive Representation Learning",
  booktitle = "Proc. NeurIPS",
  year = "2022",
}

\appendix

\section{Proofs}\label{app:proofs}
\subsection{Proof of Theorem~\ref{thm2:BF-Gradient}}
\label{app:disscuss_thm_2}
\begin{proof}
Recall the total loss function $\mathcal{L}(\theta_t)$ for $\mathcal{E}_t$ in (\ref{total-loss}) and take gradient w.r.t parameter $\theta$
\begin{align}\label{eq:thm2-1}
\nabla_\theta\mathcal{L}(\theta_t)= \nabla_\theta\mathcal{L}_{CE}(\theta_t)+\lambda_t\; \nabla_\theta\mathcal{L}_{CLIP} (\theta_t). 
\end{align}
Next step is to apply CBF from $\mathcal{E}_t$ to  $\mathcal{E}_{t'}$, i.e. $\mathbf{P}_{t\rightarrow t'}\odot\nabla_\theta \mathcal{L}(\theta_{t'})$ from Def.~\ref{def:CBF} in both sides of (\ref{eq:thm2-1}). 
$$
\mathbf{P}_{t\rightarrow t'}\odot\nabla_\theta \mathcal{L}(\theta_{t'})=\mathbf{P}_{t\rightarrow t'}\odot \nabla_\theta\mathcal{L}_{CE}+\lambda_t \mathbf{P}_{t\rightarrow t'}\odot \nabla_\theta\mathcal{L}_{CLIP} (\theta_t).
$$
To compute $\mathbf{P}_{t\rightarrow t'}\odot \nabla_\theta\mathcal{L}_{CE}$, we use the definition of $\mathcal{L}_{CE}(\theta_t)$ at time step $t$ and its simplified formulation

\begin{align}\label{eq:thm2-3}
\mathcal{L}_{CE}(\theta_t)&= -\frac{1}{n_t}\sum_{i=1}^{n_t}(z^{id}_{t,i}\cdot u^{id}_{t,y_i}/\tau)+\frac{1}{n_t}\sum_{i=1}^{n_t}\log \alpha^C_{t,i},\;\;\hbox{where}\;\; \alpha^{C}_{t,i}=\sum_{c=1}^C e^{z^{id}_{t,i}\cdot u^{id}_{t,c}/\tau}\nonumber\\
&=-\frac{1}{n_t}\sum_{i=1}^{n_t} S^{id}_{t, y_i}+\frac{1}{n_t}\sum_{i=1}^{n_t}\log \alpha^C_{t,i}, \;\;\hbox{where score}\;\;   S^{id}_{t,y_i}=(z^{id}_{t,i}\cdot u^{id}_{t,y_i})/\tau.
\end{align}
Take the gradient of (\ref{eq:thm2-3}) w.r.t $\theta$ and apply CBF:
\begin{align}\label{eq:thm2-8}
\nabla_\theta\mathcal{L}_{CE}(\theta_t)=-\frac{1}{n_t}\sum_{i=1}^{n_t} \nabla_\theta S^{id}_{t,y_i}+\frac{1}{n_t}\sum_{i=1}^{n_t} \nabla_\theta \log\alpha^C_{t,i}.
\end{align}
\begin{align}\label{eq:thm2-4}
\mathbf{P}_{t\rightarrow t'}\odot \nabla_\theta\mathcal{L}_{CE}(\theta_{t'})&=-\frac{1}{n_{t'}}\sum_{i=1}^{n_{t'}} \mathbf{P}^{(i)}_{t\rightarrow t'}\odot\nabla_\theta S^{id}_{t',y_i}+\frac{1}{n_{t'}}\sum_{i=1}^{n_{t'}}\mathbf{P}_{t\rightarrow t'}\odot\nabla_\theta \log\alpha^C_{t',i}\nonumber\\
&= -\frac{1}{n_{t'}}\sum_{i=1}^{n_{t'}} \mathbf{P}^{(i)}_{t\rightarrow t'}\odot\nabla_\theta S^{id}_{t',y_i}+\Omega(\xi_{t'}).
\end{align}
The last equality in (\ref{eq:thm2-4}) is driven from the Assumption~\ref{ass:Sufficient class-normalized gradient drift}. Now let us discuss the second term in (\ref{eq:thm2-1}): Recall the $\mathcal{L}_{CLIP}$ and simplify it:

\begin{align}\label{eq:thm2-9}
L_{CLIP}(\theta_t)=\frac{1}{2n_t}\sum_{i=1}^{n_t}\left[-z^{id}_{t,i}\cdot u^{id}_{t,i}/\tau-u^{id}_{t,i}\cdot z^{id}_{t,i}/\tau+\log \alpha_{t,i}^{z}+\log \alpha_{t,i}^{u}\right],
\end{align}
$$
\hbox{where}\;\;\alpha_{z,t}^{z}:= \sum_{j=1}^{n_t} e^{z^{id}_{t,i}\cdot u^{id}_{t,j}/\tau }\;\;\;\hbox{and}\;\;\; \alpha_{t,i}^{u}:= \sum_{j=1}^{n_t} e^{u^{id}_{t,i}\cdot z^{id}_{t,j}/\tau }.
$$
Taking the gradient w.r.t. $\theta$, applying CBF, and using the Assumption~\ref{ass:Sufficient cross-modal gradient drift} implies
\begin{align}\label{eq:thm2-5}
\mathbf{P}_{t\rightarrow t'}\odot \nabla_\theta\mathcal{L}_{CLIP}(\theta_{t'})=-\frac{1}{2n_{t'}}\sum_{i=1}^{n_{t'}} \left[\mathbf{P}^{(i)}_{t\rightarrow t'}\odot\nabla_\theta S^{id}_{t',z_i\rightarrow u_i}+\mathbf{P}^{(i)}_{t\rightarrow t'}\odot\nabla_\theta S^{id}_{t',u_i\rightarrow z_i}\right]+\Omega(\bar{\xi}_{t'}).
\end{align}
Here $\Omega(\bar{\xi}_{t'})=\Omega(\xi_{z,t'})+\Omega(\xi_{u,t'})$, and two scores $S^{id}_{t,z_i\rightarrow u_i}$ and $S^{id}_{t,u_i\rightarrow z_i}$ are defined as follows, 
$$
S^{id}_{t,z_i\rightarrow u_i}=(z^{id}_{t,i}\cdot u^{id}_{t,i})/\tau,\;\;\; \hbox{and}\;\;\; S^{id}_{t,u_i\rightarrow z_i}=(u^{id}_{t,i}\cdot z^{id}_{t,i})/\tau.
$$
Putting (\ref{eq:thm2-4}) and (\ref{eq:thm2-5}) together leads to
\begin{align}\label{eq:thm2-6}
&\mathbf{P}_{t\rightarrow t'}\odot\nabla_\theta \mathcal{L}(\theta_{t'})=-\frac{1}{n_{t'}}\sum_{i=1}^{n_{t'}} \mathbf{P}^{(i)}_{t\rightarrow t'}\odot\nabla_\theta S^{id}_{t',y_i}\nonumber\\
&-\frac{\lambda}{2n_{t'}}\sum_{i=1}^{n_{t'}} \left[\mathbf{P}^{(i)}_{t\rightarrow t'}\odot\nabla_\theta S^{id}_{t',z_i\rightarrow u_i}+\mathbf{P}^{(i)}_{t\rightarrow t'}\odot\nabla_\theta S^{id}_{t',u_i\rightarrow z_i}\right]+\Omega(\widetilde{\xi}_{t'}), 
\end{align}
where $\Omega(\widetilde{\xi}_{t'})=\lambda\Omega(\bar{\xi}_{t'})+\Omega({\xi}_{t'})$. Note that  $\Omega(\widetilde{\xi}_{t'})$ has the same dimension as $\nabla_\theta \mathcal{L}(\theta_{t'})$.\\
Taking $L_2$-norm from LHS and RHS of (\ref{eq:thm2-6}) and using $\|a\|^2=\|-a\|^2$, leads to
\begin{align}\label{eq:thm2-7}
&\|\mathbf{P}_{t\rightarrow t'}\odot\nabla_\theta \mathcal{L}(\theta_{t'})\|^2=\|\frac{1}{n_{t'}}\sum_{i=1}^{n_{t'}} \mathbf{P}^{(i)}_{t\rightarrow t'}\odot\nabla_\theta S^{id}_{t',y_i}\nonumber\\
&+\frac{\lambda}{2n_{t'}}\sum_{i=1}^{n_{t'}} \left[\mathbf{P}^{(i)}_{t\rightarrow t'}\odot\nabla_\theta S^{id}_{t',z_i\rightarrow u_i}+\mathbf{P}^{(i)}_{t\rightarrow t'}\odot\nabla_\theta S^{id}_{t',u_i\rightarrow z_i}\right]-\Omega(\widetilde{\xi}_{t'})\|^2\nonumber\\
&\geq \frac{1}{n_{t'}}\sum_{i=1}^{n_{t'}} \|\mathbf{P}^{(i)}_{t\rightarrow t'}\odot\nabla_\theta S^{id}_{t',y_i}\|^2\nonumber\\
&+\frac{\lambda}{2n_{t'}}\sum_{i=1}^{n_{t'}} \left(\|\mathbf{P}^{(i)}_{t\rightarrow t'}\odot\nabla_\theta S^{id}_{t',z_i\rightarrow u_i}\|^2+\|\mathbf{P}^{(i)}_{t\rightarrow t'}\odot\nabla_\theta S^{id}_{t',u_i\rightarrow z_i}\|^2\right)+\|\Omega(\widetilde{\xi}_{t'})\|^2 -\rho_{max},
\end{align}
where $\rho_{max}\in\mathbb{R}$ is the maximum of the remainder terms. We simplify the RHS of (\ref{eq:thm2-7}) as
\begin{align}
&\frac{1}{2n_{t'}}\sum_{i=1}^{n_{t'}} \left(\|\mathbf{P}^{(i)}_{t\rightarrow t'}\odot\nabla_\theta S^{id}_{t',y_i}\|^2+\lambda\|\mathbf{P}^{(i)}_{t\rightarrow t'}\odot\nabla_\theta S^{id}_{t',z_i\rightarrow u_i}\|^2+\lambda\|\mathbf{P}^{(i)}_{t\rightarrow t'}\odot\nabla_\theta S^{id}_{t',u_i\rightarrow z_i}\|^2\right)\nonumber\\
&+\Omega(\rho_{max},\widetilde{\xi}_{t'}), 
\end{align}
where $\Omega(\rho_{max},\widetilde{\xi}_{t'})=\|\Omega(\widetilde{\xi}_{t'})\|^2 -\rho_{max}$. This implies the alternative optimization problem in (\ref{thm:CBF-optimization}) and concludes our claim in Theorem~\ref{thm2:BF-Gradient}. 
\end{proof}

\subsection{Proof of Theorem~\ref{thm3:FPR-CBF-Optimal}}
\label{app:disscuss_thm_3}

\begin{proof}
Apply BF on the gradient of $\widehat{FPR}_{t\rightarrow t'} (\tau,\lambda)$:
\begin{align}\label{eq:thm3-1}
\mathbf{P}_{t\rightarrow t'}\odot\nabla_\theta \widehat{FPR}_{t\rightarrow t'} (\tau,\lambda)&=\frac{1}{\beta \; n^{ood}_{t'}} \sum_{i=1}^{n^{ood}_{t'}} \mathbf{P}^{(i)}_{t\rightarrow t'}\odot\left(\sigma'\left(\frac{S_{MCM}(\mathcal{I}^{ood}_{i, t\rightarrow t'})-\lambda}{\beta}\right) \nabla_\theta S_{MCM}(\mathcal{I}^{ood}_{i, t\rightarrow t'})\right)\nonumber \\
&\geq \frac{C_\sigma}{\beta \; n^{ood}_{t'}} \sum_{i=1}^{n^{ood}_{t'}} \mathbf{P}^{(i)}_{t\rightarrow t'}\odot\nabla_\theta S_{MCM}(\mathcal{I}^{ood}_{i, t\rightarrow t'}). 
\end{align}
Note that $\sigma'(u)=\sigma(u)(1-\sigma(u))$ and lower bounded by $C_\sigma$. Next, let us focus on $\nabla_\theta S_{MCM}$. Now take $\|.\|^2$:
\begin{align}
\|\mathbf{P}_{t\rightarrow t'}\odot\nabla_\theta \widehat{FPR}_{t\rightarrow t'} (\tau,\lambda)\|^2\geq \frac{C_\sigma}{\beta \; n^{ood}_{t'}} \sum_{i=1}^{n^{ood}_{t'}} \|\mathbf{P}^{(i)}_{t\rightarrow t'}\odot\nabla_\theta S_{MCM}(\mathcal{I}^{ood}_{i, t\rightarrow t'})\|^2-\rho_{max, t\rightarrow t'}, 
\end{align}
where $\rho_{max, t\rightarrow t'}$ is the maximum of the remainder terms.  

Let $k$ be the class that maximizes the $S_{MCM}$: 
\begin{align}\label{eq:thm3-6}
\nabla_\theta S_{MCM} &= \nabla_\theta \left\{\frac{e^{s_k(\mathcal{I}_{t'}^{ood})/\tau}}{\sum_{j=1}^C e^{s_j(\mathcal{I}_{t'}^{ood})/\tau}}+\mathbf{A}^*_{t\rightarrow t'}\frac{e^{s_k(\mathcal{I}_{t\rightarrow t'}^{ood})/\tau}}{\sum_{j=1}^C e^{s_j(\mathcal{I}_{t\rightarrow t'}^{ood})/\tau}}\right\}\nonumber\\
&=\nabla_\theta \left\{\frac{e^{s_k(\mathcal{I}_{t'}^{ood})/\tau}}{\sum_{j=1}^C e^{s_j(\mathcal{I}_{t'}^{ood})/\tau}}\right\}+ \mathbf{A}^*_{t\rightarrow t'}  \nabla_\theta \left\{\frac{e^{s_k(\mathcal{I}_{t\rightarrow t'}^{ood})/\tau}}{\sum_{j=1}^C e^{s_j(\mathcal{I}_{t\rightarrow t'}^{ood})/\tau}}\right\}.
\end{align}

By (\ref{Env.alignment-factor}) and Def.~\ref{def:Cross-environment alignment}, $\mathbf{A}^*_{t\rightarrow t'} = \pm(2-\sigma_{\mathcal{I}}-\sigma_{\mathcal{T}})$.
Set 
\begin{align}\label{thm3:eq2}
p_{k, t'}:=\frac{e^{s_k(\mathcal{I}_{t'}^{ood})/\tau}}{\sum_{j=1}^C e^{s_j(\mathcal{I}_{t'}^{ood})/\tau}},\;\;\; p_{k,t\rightarrow t'}:=\frac{e^{s_k(\mathcal{I}_{t\rightarrow t'}^{ood})/\tau}}{\sum_{j=1}^C e^{s_j(\mathcal{I}_{t\rightarrow t'}^{ood})/\tau}}.
\end{align}
The by chain rule, the gradient of $S_{MCM}$ is the following 
\begin{align}
\frac{\partial p_{k, t'}}{\partial z_{j, t'}}= p_{k, t'}(\delta_{kj, t'}-p_{j, t'}), \;\; \frac{\partial p_{k, t\rightarrow t'}}{\partial z_{j, t\rightarrow t'}}= p_{k, t\rightarrow t'}(\delta_{kj, t\rightarrow t'}-p_{j, t\rightarrow t'}), 
\end{align}
where $z_i=s_j(\mathcal{I}_{t'}^{ood})/\tau$. 
Here $p_{j, t'}$ and $p_{j, t\rightarrow t'}$ are defined similar to (\ref{thm3:eq2}). $\delta_{kj}$ and $\delta_{kj, t\rightarrow t'}$ are Kronecker delta. In addition
\begin{align}\label{thm3-5}
&\nabla_\theta \left\{\frac{e^{s_k(\mathcal{I}_{t'}^{ood})/\tau}}{\sum_{j=1}^C e^{s_j(\mathcal{I}_{t'}^{ood})/\tau}}\right\}=\sum_{j=1}^C \frac{\partial p_{k, t'}}{\partial z_{j, t'}}\nabla_\theta z_{j, t'}\nonumber\\
&= \sum_{j=1}^C p_{k, t'}(\delta_{kj,t}-p_{j,t'})\nabla_\theta z_{j, t'}= p_{k, t'}\left(\nabla_\theta z_{k, t'}-\sum_{j=1}^C p_{j, t'} \nabla_\theta z_{j, t'} \right)=\frac{p_{k, t'}}{\tau} \left(\nabla_\theta s_{k, t'}-\sum_{j=1}^C p_{j, t'} \nabla_\theta s_{j, t'} \right)
\end{align}
Similarly

\begin{align}\label{thm3-4}
\nabla_\theta  \left\{\frac{e^{s_k(\mathcal{I}_{t\rightarrow t'}^{ood})/\tau}}{\sum_{j=1}^C e^{s_j(\mathcal{I}_{t\rightarrow t'}^{ood})/\tau}}\right\}&= p_{k, t\rightarrow t'}\left(\nabla_\theta z_{k, t\rightarrow t'}-\sum_{j=1}^C p_{j, t\rightarrow t'} \nabla_\theta z_{j, t\rightarrow t'} \right)\nonumber\\
&=\frac{p_{k, t\rightarrow t'}}{\tau}\left(\nabla_\theta s_{k, t\rightarrow t'}-\sum_{j=1}^C p_{j, t\rightarrow t'} \nabla_\theta s_{j, t\rightarrow t'} \right).
\end{align}
Combining (\ref{thm3-4}) and (\ref{thm3-5}), bounding $\frac{p_{k, t'}}{\tau}\geq C_{k,t'}$ and $\frac{p_{k, t\rightarrow t'}}{\tau}\geq C_{k,t\rightarrow t'}$ and plugging them in (\ref{eq:thm3-6}), we have 
\begin{align}\label{thm3-7}
&\|\mathbf{P}_{t\rightarrow t'}\odot\nabla_\theta \widehat{FPR}_{t\rightarrow t'} (\tau,\lambda)\|^2\geq \frac{C_\sigma C_k}{\beta \; n^{ood}_{t'}} \sum_{i=1}^{n^{ood}_{t'}} \large\|\mathbf{P}^{(i)}_{t\rightarrow t'}\odot [\nabla_\theta s_{k, t'}-\sum_{j=1}^C p_{j, t'} \nabla_\theta s_{j, t'} ]\nonumber\\
&+\mathbf{A}^*_{t\rightarrow t'} \mathbf{P}^{(i)}_{t\rightarrow t'}\odot[\nabla_\theta s_{k, t\rightarrow t'}-\sum_{j=1}^C p_{j, t\rightarrow t'} \nabla_\theta s_{j, t\rightarrow t'}] \large\|^2-\rho_{max, t\rightarrow t'}, 
\end{align}
$C_k=min\{C_{k,t'}, C_{k,t\rightarrow t'}\}$. Next we can bound the RHS of (\ref{thm3-7}) further by
\begin{align}
&\frac{C_{\sigma,k}}{\beta \; n^{ood}_{t'}} \sum_{i=1}^{n^{ood}_{t'}} \large\|\mathbf{P}^{(i)}_{t\rightarrow t'}\odot [\nabla_\theta s_{k, t'}-\sum_{j=1}^C p_{j, t'} \nabla_\theta s_{j, t'} ]\large\|^2\nonumber\\
&+\mathbf{A}^*_{t\rightarrow t'} \large\|\mathbf{P}^{(i)}_{t\rightarrow t'}\odot[\nabla_\theta s_{k, t\rightarrow t'}-\sum_{j=1}^C p_{j, t\rightarrow t'} \nabla_\theta s_{j, t\rightarrow t'}] \large\|^2-\hat{\rho}_{max, t\rightarrow t'}, 
\end{align}
where $\hat{\rho}_{max, t\rightarrow t'}$ is maximum remainder and $C_{\sigma,k}=C_\sigma C_k$. 
\end{proof}

\subsection{Proof of Theorem~\ref{thm3:MCM}}
For two environments $\mathcal{E}_t$, $\mathcal{E}_{t'}$ ($t<t'$) with label sets $\mathcal{Y}_{in,t} = \{y_{t,1}, y_{t,2}, ..., y_{t,C} \}$ and $\mathcal{Y}_{in,t'} = \{y_{t',1}, y_{t',2}, ..., y_{t',C} \}$ and collection of concepts vectors $\mathcal{T}_{t,i}$, $\mathcal{T}_{t',i}$, for $i=1,\ldots,C$. Without loss of generality we assume that both environments have the same class size. We define the cross-environment maximum concept matching (MCM) score \begin{align}\label{app:def.cross-environment-MCM-score}
S_{MCM}(\mathcal{I}^{test}_{t\rightarrow t'})= \max_i\left\{\frac{e^{s_i(\mathcal{I}_{t'}^{test})/\tau}}{\sum_{j=1}^C e^{s_j(\mathcal{I}_{t'}^{test})/\tau}}+\mathbf{A}^*_{t\rightarrow t'}\frac{e^{s_i(\mathcal{I}_{t\rightarrow t'}^{test})/\tau}}{\sum_{j=1}^C e^{s_j(\mathcal{I}_{t\rightarrow t'}^{test})/\tau}}\right\},
\end{align}
where $s_i(\mathcal{I}_{t'}^{test})$ is the cosine similarity between test image encoder $z_{t'}=f(\mathcal{I}_{t'}^{test})$ and trained text encoder in $\mathcal{E}_{t'}$ $u_{t',i}:=g(\mathcal{T}_{t'}^{train}(t_i))$.  $s_i(\mathcal{I}_{t\rightarrow t'}^{test})$ is the cosine similarity between test image encoder  $z_{t'}=f(\mathcal{I}_{t'}^{test})$ in $\mathcal{E}_{t'}$ and train text encoder in $\mathcal{E}_{t}$ $u_{t,i}:=g(\mathcal{T}_{t}^{train}(t_i))$. The environment alignment factor $\mathbf{A}^*_{t\rightarrow t'}$ is determined by intra-modal contribution in Def.~\ref{def:Intra-Modality Contribution}: 
\begin{align}\label{Env.alignment-factor-SM}
\mathbf{A}^*_{t\rightarrow t'}= \left\{ \begin{array}{rcl}
+Con_{\mathcal{E}_t\rightarrow \mathcal{E}_{t'}} & \hbox{if $\mathcal{E}_t$ and $\mathcal{E}_{t'}$ are aligned}\;\; \\
\\ -Con_{\mathcal{E}_t\rightarrow \mathcal{E}_{t'}}  & \hbox{if $\mathcal{E}_t$ and $\mathcal{E}_{t'}$ are not aligned},  
\end{array}\right.
\end{align}
where $Con_{\mathcal{E}_t\rightarrow \mathcal{E}_{t'}}=Con_{\mathcal{I}_t\rightarrow\mathcal{I}_{t'}} +Con_{\mathcal{T}_t\rightarrow\mathcal{T}_{t'}}$ and $\mathcal{E}_t$ and $\mathcal{E}_{t'}$ are aligned iff $Con_{\mathcal{E}_t\rightarrow \mathcal{E}_{t'}}> \alpha$. 
The MCM-based OOD detector is 
\label{app:thm3:MCM}
\begin{align}
D(\mathcal{I}_{t'}^{test})= \left\{ \begin{array}{rcl}
1 & \hbox{if}\;\; S_{MCM}(\mathcal{I}^{test}_{t\rightarrow t'}) \geq \lambda\\
\\ 0  & \hbox{if} \;S_{MCM}(\mathcal{I}^{test}_{t\rightarrow t'}) < \lambda
\end{array}\right.
\end{align}
\begin{proof}
Let $z:=\mathds{1}\{y_{t'}\in\mathcal{Y}_{in,{t'}}\}$ and $\mathbb{P}^{ood}_{\mathcal{I}^{test}_{t'}}:\mathbb{P}_{\mathcal{I}^{test}_{t'}|z=0}$(i.e the marginal
distribution of $\mathcal{I}_{t'}$ conditioned on $z= 0$) denotes the out-of-distribution for image $\mathcal{I}_{t'}$ .  By definition, the false positive rate $FPR(\tau,\lambda)$ is computed as 
\begin{align}
FPR(\tau,\lambda)=\mathbb{P}(D(\mathcal{I}_{t'}^{test})=1|z=0)=\mathbb{P}^{ood}_{\mathcal{I}^{test}_{t'}} (D(\mathcal{I}_{t'}^{test})=1)=\mathbb{P}^{ood}_{\mathcal{I}^{test}_{t'}} (P_{\hat{y}_{t'}}(\mathcal{I}_{t'};\tau)\geq \lambda)
\end{align}
where $\hat{y}_{t\rightarrow t'}:= \argmax_{i\in [C]}s_i(\mathcal{I}^{test}_{t\rightarrow t'})$ 
is largest cross-environment cosine similarities for an OOD input $\mathcal{I}$. And $\hat{y}_{t'}:= \argmax_{i\in [C]}s_i(\mathcal{I}^{test}_{t'})$ is the largest intra-environment cosine similarities for an OOD input $\mathcal{I}$. Note that $\hat{y}_{t'}$ and $\hat{y}_{t\rightarrow t'}$ are different classes. 

Going back to (\ref{app:def.cross-environment-MCM-score}), we write 
\begin{align}
FPR_{t\rightarrow t'}(\tau,\lambda)&=\mathbb{P}^{ood}_{\mathcal{I}^{test}_{t'}} \left(\frac{e^{s_{\hat{y}_{t\rightarrow t'}}(\mathcal{I}_{t'}^{test})/\tau}}{\sum_{j=1}^C e^{s_j(\mathcal{I}_{t'}^{test})/\tau}}+\mathbf{A}^*_{t\rightarrow t'}\frac{e^{s_{\hat{y}_{t\rightarrow t'}}(\mathcal{I}_{t\rightarrow t'}^{test})/\tau}}{\sum_{j=1}^C e^{s_j(\mathcal{I}_{t\rightarrow t'}^{test})/\tau}}\geq \lambda\right)\nonumber\\
&\leq \mathbb{P}^{ood}_{\mathcal{I}^{test}_{t'}} \left(\frac{e^{s_{\hat{y}_{t\rightarrow t'}}(\mathcal{I}_{t'}^{test})/\tau}}{\sum_{j=1}^C e^{s_j(\mathcal{I}_{t'}^{test})/\tau}}+\mathbf{A}^*_{t\rightarrow t'}\frac{e^{s_{\hat{y}_{t\rightarrow t'}}(\mathcal{I}_{t\rightarrow t'}^{test})/\tau}}{\sum_{j=1}^C e^{s_j(\mathcal{I}_{t'}^{test})/\tau}}\geq \lambda\right)\label{eq:222}\\
&=\mathbb{P}^{ood}_{\mathcal{I}^{test}_{t'}} \left(\frac{1}{\lambda}\geq \sum_{i=1}^C \exp\left(\frac{s_j(\mathcal{I}_{t'}^{test})-s_{\hat{y}_{t\rightarrow t'}}(\mathcal{I}_{t'}^{test})-\mathbf{A}^*_{t\rightarrow t'}s_{\hat{y}_{t\rightarrow t'}}(\mathcal{I}_{t\rightarrow t'}^{test})}{\tau}\right)\right). 
\end{align}
The inequality in (\ref{eq:222}) is based on Assumption~\ref{ass:positive environment transition}. Note that because of $\mathcal{E}_t$ and $\mathcal{E}_{t'}$ are aligned therefore $\mathbf{A}^*_{t\rightarrow t'}$ is positive. By inequality $e^u \geq 1 + u$, we have,
\begin{align}\label{Ineq.2}
\mathbb{P}^{ood}_{\mathcal{I}^{test}_{t'}} \left(\frac{1}{\lambda}\geq \sum_{i=1}^C \exp\left(\frac{s_j(\mathcal{I}_{t'}^{test})-s_{\hat{y}_{t\rightarrow t'}}(\mathcal{I}_{t'}^{test})-\mathbf{A}^*_{t\rightarrow t'}s_{\hat{y}_{t\rightarrow t'}}(\mathcal{I}_{t\rightarrow t'}^{test})}{\tau}\right)\right) \nonumber\\
\leq \mathbb{P}^{ood}_{\mathcal{I}^{test}_{t'}} \left(\frac{1}{\lambda}\geq \sum_{i=1}^C \left(1+\frac{s_j(\mathcal{I}_{t'}^{test})-s_{\hat{y}_{t\rightarrow t'}}(\mathcal{I}_{t'}^{test})-\mathbf{A}^*_{t\rightarrow t'}s_{\hat{y}_{t\rightarrow t'}}(\mathcal{I}_{t\rightarrow t'}^{test})}{\tau}\right)\right)\nonumber\\
=\mathbb{P}^{ood}_{\mathcal{I}^{test}_{t'}} \left(\sum_{i=1}^C\left(s_{\hat{y}_{t\rightarrow t'}}(\mathcal{I}_{t'}^{test})+\mathbf{A}^*_{t\rightarrow t'}s_{\hat{y}_{t\rightarrow t'}}(\mathcal{I}_{t\rightarrow t'}^{test})-s_j(\mathcal{I}_{t'}^{test})\right)\geq \left(C-\frac{1}{\lambda}\right)\tau\right)\nonumber\\
=\mathbb{P}^{ood}_{\mathcal{I}^{test}_{t'}} \left(\sum_{i=1}^C\left(s_{\hat{y}_{t\rightarrow t'}}(\mathcal{I}_{t'}^{test})-s_j(\mathcal{I}_{ t'}^{test})\right)+\sum_{i=1}^C\mathbf{A}^*_{t\rightarrow t'}s_{\hat{y}_{t\rightarrow t'}}(\mathcal{I}_{t\rightarrow t'}^{test})\geq \left(C-\frac{1}{\lambda}\right)\tau\right)
\end{align}

Next we have
\begin{align}
&\sum_{i=1}^C\left(s_{\hat{y}_{t\rightarrow t'}}(\mathcal{I}_{t'}^{test})-s_j(\mathcal{I}_{t'}^{test})\right)=\sum_{i=1}^C\left(s_{\hat{y}_{t\rightarrow t'}}(\mathcal{I}_{t'}^{test})- s_{\hat{y}_{t'}}(\mathcal{I}_{t'}^{test})+ s_{\hat{y}_{t'}}(\mathcal{I}_{t'}^{test})-s_j(\mathcal{I}_{t'}^{test})\right)\nonumber\\
&=\sum_{i=1}^C\left(s_{\hat{y}_{t\rightarrow t'}}(\mathcal{I}_{t'}^{test})- s_{\hat{y}_{t'}}(\mathcal{I}_{t'}^{test})\right)+ \sum_{i=1}^C\left( s_{\hat{y}_{t'}}(\mathcal{I}_{t'}^{test})-s_j(\mathcal{I}_{t'}^{test})\right)\nonumber\\
&=C \left(s_{\hat{y}_{t\rightarrow t'}}(\mathcal{I}_{t'}^{test})-s_{\hat{y}_{t'}}(\mathcal{I}_{t'}^{test})\right)+\sum_{i=1}^ C\left(s_{\hat{y}_{t'}}(\mathcal{I}_{t'}^{test})-s_j(\mathcal{I}_{ t'}^{test})\right)
\end{align}
Note that $s_{\hat{y}_{t\rightarrow t'}}(\mathcal{I}_{t'}^{test})-s_{\hat{y}_{t'}}(\mathcal{I}_{t'}^{test})<0$.  Using Assumption~\ref{ass:Optimal Class Connections} (a), 
\begin{align}
\mathbb{P}^{ood}_{\mathcal{I}^{test}_{t'}} \left(\sum_{i=1}^C\left(s_{\hat{y}_{t\rightarrow t'}}(\mathcal{I}_{t'}^{test})-s_j(\mathcal{I}_{t'}^{test})\right)<  C \left(s_{\hat{y}_{t\rightarrow t'}}(\mathcal{I}_{t'}^{test})-s_{\hat{y}_{t'}}(\mathcal{I}_{t'}^{test})\right)+ C\delta\right)=1
\end{align}
This implies that 
\begin{align}
&\mathbb{P}^{ood}_{\mathcal{I}^{test}_{t'}} \left(\sum_{i=1}^C\left(s_{\hat{y}_{t\rightarrow t'}}(\mathcal{I}_{t'}^{test})-s_j(\mathcal{I}_{ t'}^{test})\right)+\sum_{i=1}^C\mathbf{A}^*_{t\rightarrow t'}s_{\hat{y}_{t\rightarrow t'}}(\mathcal{I}_{t\rightarrow t'}^{test})\geq \left(C-\frac{1}{\lambda}\right)\tau\right)\nonumber\\
&\leq \mathbb{P}^{ood}_{\mathcal{I}^{test}_{t'}} \left(C\mathbf{A}^*_{t\rightarrow t'}s_{\hat{y}_{t\rightarrow t'}}(\mathcal{I}_{t\rightarrow t'}^{test})+C \left(s_{\hat{y}_{t\rightarrow t'}}(\mathcal{I}_{t'}^{test})-s_{\hat{y}_{t'}}(\mathcal{I}_{t'}^{test})\right)\geq \left(C-\frac{1}{\lambda}\right)\tau-C\delta\right)\nonumber\\
&=\mathbb{P}^{ood}_{\mathcal{I}^{test}_{t'}} \left(\mathbf{A}^*_{t\rightarrow t'}s_{\hat{y}_{t\rightarrow t'}}(\mathcal{I}_{t\rightarrow t'}^{test})+ \left(s_{\hat{y}_{t\rightarrow t'}}(\mathcal{I}_{t'}^{test})-s_{\hat{y}_{t'}}(\mathcal{I}_{t'}^{test})\right)\geq \lambda'-\delta\right), \label{Thm:eq2}
\end{align}
where $\lambda'=\left(1-\frac{1}{C\lambda}\right)\tau$. Recalling Definition~\ref{def:Intra-Modality Contribution} and the environment alignment factor (\ref{Env.alignment-factor-SM}), we have $\mathbf{A}^*_{t\rightarrow t'}=2-\sigma_\mathcal{I}-\sigma_{\mathcal{T}}$. Under Assumption~\ref{ass:Optimal Class Connections} (b), $\exists M\geq 0$ such that

\begin{align}
\hbox{RHS of (\ref{Thm:eq2})} &\leq \mathbb{P}^{ood}_{\mathcal{I}^{test}_{t'}} \left( Ms_{\hat{y}_{t'}}(\mathcal{I}_{t'}^{test})\geq \lambda'-\delta\right)\\
&= \mathbb{P}^{ood}_{\mathcal{I}^{test}_{t'}} \left( s_{\hat{y}_{t'}}(\mathcal{I}_{t'}^{test})/\tau\geq \frac{1}{M\tau}(\lambda'-\delta)\right)\nonumber\\
&\leq \mathbb{P}^{ood}_{\mathcal{I}^{test}_{t'}} \left( \frac{e^{s_{\hat{y}_{t'}}(\mathcal{I}_{t'}^{test})/\tau}}{\sum_{j=1}^C e^{s_j(\mathcal{I}_{t'}^{test})/\tau}}\geq \frac{\tau}{M'}(\lambda'-\delta)\right).\label{them:eq34}
\end{align}
where $\hat{y}_{t'}:= \argmax_{i\in [C]}s_i(\mathcal{I}^{test}_{t'})$. The Ineq.~(\ref{them:eq34}) holds true because the normalized term $\sum_{j=1}^C e^{s_j(\mathcal{I}_{t'}^{test})/\tau}$ is bounded. This concludes that
$$
FPR_{t\rightarrow t'}(\tau,\lambda)\leq FPR_{t'}(\tau,\lambda_{t'}), \;\; \lambda_{t'}:= \frac{\tau}{M'}(\lambda'-\delta).
$$
\end{proof}

\subsection{Proofs for Section~\ref{sec:theory}: hard and soft OOD}\label{app:hard-soft}
Throughout we use Cantelli's one-sided inequality \citep{cantelli1928confini}: for a random variable $W$ with mean $\mu$ and variance $v$, and any $a>0$, $\Pr[W-\mu\le -a]\le v/(v+a^{2})$ and $\Pr[W-\mu\ge a]\le v/(v+a^{2})$. We also use the following bound for variables bounded above: if $W\le 1$ has mean $\mu$ and $\ell<1$, then $\mu\le\ell\Pr[W<\ell]+\Pr[W\ge\ell]=\ell+(1-\ell)\Pr[W\ge\ell]$, so
\begin{equation}\label{eq:reverse-markov}
\Pr[W\ge\ell]\ge\frac{\mu-\ell}{1-\ell}.
\end{equation}

\begin{proof}[Proof of Lemma~\ref{lem:ood-moments}]
By (\ref{eq:mcm}), $\tau S_t(x)=\max_{c}\cos(u_{t,c},f(x))$; taking expectations under $P^I_t$ and $P^I_s$ gives $m_t$ and $\mathrm{Con}^{\max}_{\mathcal{T}_t\to\mathcal{I}_s}$ by Definition~\ref{def:gencon}. A maximum over the classes dominates their uniform average pointwise, which gives $\mathrm{Con}^{\max}_{\mathcal{T}_t\to\mathcal{I}_s}\ge\mathrm{Con}_{\mathcal{T}_t\to\mathcal{I}_s}$.
\end{proof}

\begin{proof}[Proof of Theorem~\ref{thm:soft-ood}]
Let $D=\tau\big(S_t(X)-S_t(X')\big)$. By independence, $\mathbb{E}[D]=\gamma_{t,s}\ge\eta>0$ and $\mathrm{Var}(D)=\nu_{t,s}$, so Cantelli gives $\Pr[D\le 0]\le\nu_{t,s}/(\nu_{t,s}+\gamma_{t,s}^{2})$ and $\mathrm{AUROC}_t(s)\ge\Pr[D>0]\ge\gamma_{t,s}^{2}/(\gamma_{t,s}^{2}+\nu_{t,s})$. Since $g\mapsto g^{2}/(g^{2}+\nu)$ is non-decreasing on $g>0$, this is at least $\eta^{2}/(\eta^{2}+\nu_{t,s})$, and $\nu_{t,s}\le 2$ gives the last inequality. For the rates, $\tau S_t(X)$ has mean $m_t$ and variance $v^{id}_t$, so $1-\mathrm{TPR}_t(\lambda)=\Pr[\tau S_t(X)<m_t-\eta/2]\le v^{id}_t/(v^{id}_t+\eta^{2}/4)$. The variable $\tau S_t(X')$ has mean $\mu'\le m_t-\eta$, so $\mathrm{FPR}_t(s;\lambda)\le\Pr[\tau S_t(X')-\mu'\ge\eta/2]\le v^{ood}_{t,s}/(v^{ood}_{t,s}+\eta^{2}/4)$. The outer bounds use $v^{id}_t,v^{ood}_{t,s}\le 1$.
\end{proof}

\begin{proof}[Proof of Theorem~\ref{thm:hard-ood}]
The first claim follows from Definition~\ref{def:hard-soft} and the monotonicity of $g\mapsto g^{2}/(g^{2}+\nu)$. For (\ref{eq:hard-rates}), apply (\ref{eq:reverse-markov}) to $W=\tau S_t(X')$, whose mean is $\mathrm{Con}^{\max}_{\mathcal{T}_t\to\mathcal{I}_s}\ge m_t-\eta$, and to $W=\tau S_t(X)$, whose mean is $m_t$ (Lemma~\ref{lem:ood-moments}); note that $S_t\ge\lambda$ if and only if $\tau S_t\ge\ell$. If $\mathcal{E}_s$ is strictly hard, then with $D$ as above $\mathbb{E}[D]=\gamma_{t,s}<0$, so $\mathrm{AUROC}_t(s)\le\Pr[D\ge 0]=\Pr[D-\gamma_{t,s}\ge|\gamma_{t,s}|]\le\nu_{t,s}/(\nu_{t,s}+\gamma_{t,s}^{2})$ by Cantelli, and $|\gamma_{t,s}|=\mathrm{Con}^{\max}_{\mathcal{T}_t\to\mathcal{I}_s}-m_t$ by (\ref{eq:gap-contribution}).
\end{proof}

\begin{proof}[Proof of Proposition~\ref{prop:align-hard}]
Let $a,b,c$ be the unit vectors along $u_{t,y}$, $f(x)$, and $f(x')$. For unit vectors $\|a-b\|^{2}=2(1-\cos(a,b))$, so the triangle inequality $\|a-c\|\le\|a-b\|+\|b-c\|$ gives, pointwise,
\begin{equation*}
1-\cos(a,c)\le\Big(\sqrt{1-\cos(a,b)}+\sqrt{1-\cos(b,c)}\Big)^{2}.
\end{equation*}
Expanding the square, taking expectations, and applying the Cauchy--Schwarz inequality $\mathbb{E}[\sqrt{UV}]\le\sqrt{\mathbb{E}[U]\,\mathbb{E}[V]}$ to the cross term yields $\mathbb{E}[1-\cos(a,c)]\le\big(\sqrt{\delta_t}+\sqrt{\sigma_{\mathcal{I}}}\big)^{2}$. Finally, $\max_{c'}\cos(u_{t,c'},f(x'))\ge\cos(u_{t,y},f(x'))$ pointwise and $x'$ has marginal law $P^I_s$, so $\mathrm{Con}^{\max}_{\mathcal{T}_t\to\mathcal{I}_s}\ge\mathbb{E}[\cos(a,c)]$.
\end{proof}

\begin{proof}[Proof of Corollary~\ref{cor:aligned-hard}]
Here $\sigma_{\mathcal{I}}\le\bar\sigma$, and the right-hand side of (\ref{eq:align-hard}) is non-increasing in $\sigma_{\mathcal{I}}$; combine with Definition~\ref{def:hard-soft}.
\end{proof}

\subsection{Proofs for Section~\ref{sec:theory}: cascaded detection}\label{app:cascade}
\begin{proof}[Proof of Proposition~\ref{prop:pollution}]
Since $U_k\subseteq U_{k+1}$ and a maximum over a larger set is no smaller, $S^{(k)}(x)\le S^{(k+1)}(x)$; hence $\{S^{(k)}(X')\ge\lambda\}\subseteq\{S^{(k+1)}(X')\ge\lambda\}$, which gives the monotonicity of the false-positive rate. For each $j\le k$, $U_k$ contains the prototypes of $\mathcal{E}_{\pi_j}$, so $\tau S^{(k)}(x)\ge\max_c\cos(u_{\pi_j,c},f(x))$ pointwise; taking $\mathbb{E}_{x\sim P^I_{\pi_s}}$ gives $\mathrm{Con}^{\max}_{\mathcal{T}_{\pi_j}\to\mathcal{I}_{\pi_s}}$, and $j=s$ gives $m_{\pi_s}$.
\end{proof}

\begin{proof}[Proof of Theorem~\ref{thm:env-failure}]
By Proposition~\ref{prop:pollution}, $\tau\mathbb{E}[S^{(k)}(X')]\ge\rho_{k,s}$, so $\gamma^{(k)}_s=M_k-\tau\mathbb{E}[S^{(k)}(X')]\le M_k-\rho_{k,s}$; moreover $M_k\le 1$ since $\tau S^{(k)}\le 1$, and $M_k\ge m_{\pi_k}$ since $U_k$ contains the prototypes of $\mathcal{E}_{\pi_k}$. If $\rho_{k,s}>M_k$, then $\gamma^{(k)}_s<0$ with $|\gamma^{(k)}_s|\ge\rho_{k,s}-M_k$, and the Cantelli argument in the proof of Theorem~\ref{thm:hard-ood}, applied to $D=\tau\big(S^{(k)}(X)-S^{(k)}(X')\big)$, gives the AUROC bound. Finally, (\ref{eq:reverse-markov}) applied to $W=\tau S^{(k)}(X')$, whose mean is at least $\rho_{k,s}$, gives (\ref{eq:cascade-fpr}) at step $k$; Proposition~\ref{prop:pollution} extends it to every $k'\ge k$, and $\rho_{k,s}\ge m_{\pi_s}$.
\end{proof}

\begin{proof}[Proof of Corollary~\ref{cor:which-fail}]
The first claim is the second inequality in (\ref{eq:cascade-fpr}). The second follows from $\rho_{k,s}\ge\mathrm{Con}^{\max}_{\mathcal{T}_{\pi_j}\to\mathcal{I}_{\pi_s}}$ for each $j\neq s$ and (\ref{eq:align-hard}) applied to the pair $(\pi_j,\pi_s)$.
\end{proof}

\end{document}